\newcommand{\ra}[1]{\renewcommand{\arraystretch}{#1}}
\def\aa{\mathbf{a}}
\def\bb{\mathbf{b}}
\def\cc{\mathbf{c}}
\def\sss{\mathbf{s}}
\def\bA{\mathbf{A}}
\def\bB{\mathcal{B}}
\def\cC{\mathcal{C}}
\def\fF{\mathcal{F}}
\def\iI{\mathcal{I}}
\def\oO{\mathcal{O}}
\def\sS{\mathcal{S}}
\def\vV{\mathcal{V}}
\def\Ii{\mathbb{I}}
\def\Re{\mathbb{R}}
\def\btheta{{\bm\theta}}
\newcommand\llesser{\mathbin{<\!\!\!<}}
\DeclareMathSymbol{@}{\mathord}{letters}{"3B}
\newcommand\norm[1]{\left\lVert#1\right\rVert}
\newtheorem{definition}{Definition}
\newtheorem{property}{Property}
\newtheorem{lemma}{Lemma}
\def\latex/{\LaTeX}
\def\bibtex/{\hologo{BibTeX}}
\ificcvfinal\pagestyle{empty}\fi
\begin{document}

\title{Consensus Maximization Tree Search Revisited}

\author{Zhipeng Cai\\
The University of Adelaide
\and
Tat-Jun Chin\\
The University of Adelaide
\and
Vladlen Koltun\\
Intel Labs
}

\maketitle
\ificcvfinal\thispagestyle{empty}\fi

\begin{abstract}
	Consensus maximization is widely used for robust fitting in computer vision. However, solving it exactly, i.e., finding the globally optimal solution, is intractable. A* tree search, which has been shown to be fixed-parameter tractable, is one of the most efficient exact methods, though it is still limited to small inputs. We make two key contributions towards improving A* tree search. First, we show that the consensus maximization tree structure used previously actually contains paths that connect nodes at both adjacent and non-adjacent levels. Crucially, paths connecting non-adjacent levels are redundant for tree search, but they were not avoided previously. We propose a new acceleration strategy that avoids such redundant paths. In the second contribution, we show that the existing branch pruning technique also deteriorates quickly with the problem dimension. We then propose a new branch pruning technique that is less dimension-sensitive to address this issue. Experiments show that both new techniques can significantly accelerate A* tree search, making it reasonably efficient on inputs that were previously out of reach. Demo code is available at \url{https://github.com/ZhipengCai/MaxConTreeSearch}.
\end{abstract}

\vspace{-1.1em}
\section{Introduction}\label{sec:intro}
The prevalence of outliers makes robust model fitting crucial in many computer vision applications. One of the most popular robust fitting criteria is \emph{consensus maximization}, whereby, given outlier-contaminated data $\sS = \{\sss_i\}_{i=1}^N$, we seek the model $\btheta \in \Re^d$ that is consistent with the largest subset of the data. Formally, we solve
\begin{align}~\label{obj:maxcon}
\begin{aligned}
& \underset{\btheta }{\text{maximize}}
& & c(\btheta | \sS) = \sum_{i=1}^N{\Ii\{r(\btheta | \sss_i)\le\epsilon\}},
\end{aligned}
\end{align}
where $c(\btheta | \sS)$ is called the \emph{consensus} of $\btheta$. The 0/1 valued indicator function $\Ii\{\cdot\}$ returns 1 only when $\sss_i$ is consistent with $\btheta$, which happens when the residual $r(\btheta | \sss_i)\le\epsilon$. The form of $r(\btheta | \sss_i)$ will be defined later in Sec.~\ref{sec:treeSearch}. Constant $\epsilon$ is the predefined inlier threshold, and $d$ is called the ``problem dimension". Given the optimal solution $\btheta^*$ of~\eqref{obj:maxcon}, $\sss_i$ is an inlier if $r(\btheta^*|\sss_i)\le \epsilon$ and an outlier otherwise.

Consensus maximization is NP-hard~\cite{Chin18}, hence, sub-optimal but efficient methods are generally more practical. Arguably the most prevalent methods of this type are RANSAC~\cite{fischler81} and its variants~\cite{chum2003locally,tordoff2005guided,chum2005matching,raguram13}, which iteratively fit models on randomly sampled (minimal) data subsets and return the model with the highest consensus. However, their inherent randomness makes these methods often distant from optimal and sometimes unstable. To address this problem, deterministic optimization techniques~\cite{purkait17,le17,cai18} have been proposed, which, with good initializations, usually outperform RANSAC variants. Nonetheless, a good initial solution is not always easy to find. Hence, these methods may still return unsatisfactory results.

The weaknesses of sub-optimal methods motivate researchers to investigate globally optimal methods; however, so far they are effective on only small input sizes (small $d$, $N$ and/or number of outliers $o$). One of the most efficient exact methods is tree search~\cite{li2007practical,Chin15,Chin17} (others surveyed later in Sec.~\ref{sec:rel}), which fits~\eqref{obj:maxcon} into the framework of the LP-type methods~\cite{sharir1992combinatorial,matousek95}. By using heuristics to guide the tree search and conduct branch pruning, A* tree search~\cite{Chin15,Chin17} has been demonstrated to be much faster than Breadth-First Search (BFS) and other types of globally optimal algorithms. In fact, tree search is provably fixed-parameter tractable (FPT)~\cite{Chin18}. Nevertheless, as demonstrated in the experiment of~\cite{Chin17} and later ours, A* tree search can be highly inefficient for challenging data with moderate $d$ ($\ge 6$) and $o$ ($\ge 10$).

\vspace{-1em}
\paragraph{Our contributions.}
In this work, we analyze reasons behind the inefficiency of A* tree search and develop improvements to the algorithm. Specifically:
\begin{itemize}[leftmargin=1em,itemsep=0em,topsep=0em,parsep=1pt]
	\item We demonstrate that the previous tree search algorithm does not avoid all redundant paths, namely, paths that connect nodes from non-adjacent levels. Based on this observation, a new acceleration strategy is proposed, which can avoid such non-adjacent (and redundant) paths.
	\item We show that the branch pruning technique in~\cite{Chin17} is not always effective and may sometimes slow down the tree search due to its sensitivity to $d$. To address this problem, we propose a branch pruning technique that is less dimension-sensitive and hence much more effective.
\end{itemize}
Experiments demonstrate the significant acceleration achievable using our new techniques (3 orders of magnitude faster on challenging data). Our work represents significant progress towards making globally optimal consensus maximization practical on real data.


\subsection{Related Work}\label{sec:rel}

Besides tree search, other types of globally optimal methods include branch-and-bound (BnB)~\cite{li09,zheng11,parrabustos15}, whose exhaustive search is done by testing all possible $\btheta$. However, the time complexity of BnB is exponential in the size of the parameter space, which is often large. Moreover, the bounding function of BnB is problem-dependent and not always trivial to construct. Another type of methods~\cite{olsson2008polynomial,enqvist2012robust} enumerate and fit models on all possible bases, where each basis is a data subset of size $p$, where $p \llesser N$ and $p$ is usually slightly larger than $d$, e.g., $p = d+1$. The number of all possible bases is ${N \choose p}$, which scales poorly with $N$ and $d$. Besides differences in actual runtime, what distinguishes tree search from the other two types of methods is that tree search is FPT~\cite{Chin18}: its worst case runtime is exponential in $d$ and $o$, but polynomial in $N$.

\section{Consensus maximization tree search}\label{sec:treeSearch}

We first review several concepts that are relevant to consensus maximization tree search.

\subsection{Application range}

Tree search requires the residual $r(\btheta|\sss_i)$ to be \emph{pseudo-convex}~\cite{Chin17}. A simple example is the linear regression residual
\begin{align}\label{eq:linefitting}
r(\btheta|\sss_i) = |\aa_i^T\btheta - b_i|,
\end{align}
where each datum $\sss_i = \{\aa_i, b_i\}$, $\aa_i\in\Re^d$ and $b_i\in \Re$. Another example is the residual used in common multiview geometry problems~\cite{olsson2007efficient,cai18}, which are of the form
\begin{align}\label{eq:pseudoConvex}
r(\btheta|\sss_i) = \frac{\norm{\bA_i^T\btheta - \bb_i}_p}{\cc_i^T\btheta-d_i},
\end{align}
where each datum $\sss_i = \{\bA_i, \bb_i, \cc_i, d_i\}$, $\bA_i\in\Re^{d\times m}, \bb_i\in \Re^m$, $\cc_i \in \Re^d$ and $d_i\in\Re$. Usually, $p$ is 1, 2 or $\infty$.

\subsection{LP-type problem}

The tree search algorithm for~\eqref{obj:maxcon} is constructed by solving a series of minimax problems, which are of the form
\begin{align}\label{obj:minimax}
& \underset{\btheta}{\text{minimize}}\  \max_{i \in \sS^1 }~r(\btheta | \sss_i).
\end{align}
Problem~\eqref{obj:minimax} minimizes the maximum residual for all data in $\sS^1$, which is an arbitrary subset of $\sS$. For convenience, we define $f(\sS^1)$ as the minimum objective value of~\eqref{obj:minimax} computed on data $\sS^1$, and $\btheta(\sS^1)$ as the (exact) minimizer.

Throughout the paper, we will assume that $r(\cdot)$ is pseudo-convex and $\sS$ is non-degenerate (otherwise infinitestimal perturbations can be applied to remove degeneracy~\cite{matousek95,Chin17}). Under this assumption, problem~\eqref{obj:minimax} has a unique optimal solution and can be solved efficiently with standard solvers~\cite{eppstein2005quasiconvex}. Furthermore, \eqref{obj:minimax} is provably an LP-type problem~\cite{sharir1992combinatorial,amenta1999optimal,eppstein2005quasiconvex}, which is a generalization of the linear programming (LP) problem. An LP-type problem has the following properties:
\begin{property}[\textbf{Monotonicity}]\label{prop:Mon}
	For every two sets ${\sS^1 \subseteq \sS^2 \subseteq \sS}$, ${f(\sS^1) \le f(\sS^2) \le f(\sS)}$.
\end{property}

\begin{property}[\textbf{Locality}]\label{prop:Loc}
	For every two sets $\sS^1 \subseteq \sS^2 \subseteq \sS$ and every $\sss_i \in \sS$, $f(\sS^1) = f(\sS^2) = f(\sS^2 \cup \{\sss_i\})$ $\Rightarrow f(\sS^1) = f(\sS^1 \cup \{\sss_i\}) $.
\end{property}

With the above properties, the concept of \emph{basis}, which is essential for tree search, can be defined.

\begin{definition}[\textbf{Basis}]\label{def:basis}
	A basis $\bB$ in $\sS$ is a subset of $\sS$ such that for every $\bB'\subset\bB$, $f(\bB') < f(\bB)$.
\end{definition}

For an LP-type problem~\eqref{obj:minimax} with pseudo-convex residuals, the  maximum size of a basis, which we call \emph{combinatorial dimension}, is $d+1$.

\begin{definition}[\textbf{Violation set, level and coverage}]\label{def:vio}
	The violation set of a basis $\bB$ is defined as $\vV(\bB)= \{\sss_i \in \sS | $ $r(\btheta(\bB) | \sss_i) > f(\bB)\}$. We call $l(\bB) = |\vV(\bB)|$ the level of $\bB$ and $\cC(\bB) = \sS \backslash \vV(\bB)$ the coverage of $\bB$.
\end{definition}

By the above definition,
\begin{align}
c(\btheta(\bB) | \sS) = |\sS| - l(\bB).
\end{align}
An important property of LP-type problems is that solving~\eqref{obj:minimax} on $\cC(\bB)$ and $\bB$ return the same solution.

\begin{definition}[\textbf{Support set}]\label{def:suppSet}
	The level-0 basis for $\sS$ is called the support set of $\sS$, which we represent as $\tau(\sS)$.
\end{definition}

Assume we know the maximal inlier set $\iI$ for~\eqref{obj:maxcon}, where $|\iI| = c(\btheta^\ast | \sS)$. Define $\bB^* = \tau(\iI)$ as the support set of $\iI$; $\bB^*$ can be obtained by solving~\eqref{obj:minimax} on $\iI$. Then, $l(\bB^*)$ is the size of the minimal outlier set. Our target problem~\eqref{obj:maxcon} can then be recast as finding the optimal basis
\begin{align}\label{obj:treeSearch}
& \bB^* = \underset{\bB \subseteq \sS}{\text{argmin}}\ \ l(\bB), \ \text{s.t.}\ f(\bB) \le \epsilon,
\end{align}
and $\btheta(\bB^*)$ is the maximizer of~\eqref{obj:maxcon}. Intuitively, $\bB^*$ is the lowest level basis that is feasible, where a basis $\bB$ is called feasible if $f(\bB) \le \epsilon$.

\subsection{A* tree search algorithm}\label{sec:A*}

Matou\v{s}ek~\cite{matousek95} showed that the set of bases for an LP-type problem can be arranged in a tree, where the root node is $\tau(\sS)$, and the level occupied by a node $\bB$ on the tree is $l(\bB) = |\vV(\bB)|$. Another key insight is that there exists a path from $\tau(\sS)$ to any higher level basis, where a path is formed by a sequence of \emph{adjacent bases}, defined as follows.

\begin{algorithm}[!htb]\centering
	\begin{algorithmic}[1]
		\Require $\sS = \{ \sss_i \}^{N}_{i=1}$, threshold $\epsilon$.
		\State Insert $\bB = \tau(\sS)$ with priority $e(\bB)$ into queue $q$.
		\State Initialize hash table $T$ to NULL.
		\While{$q$ is not empty}
		\State Retrieve from $q$ the $\bB$ with the lowest $e(\bB)$.
		\If{$f(\bB) \le \epsilon$}
		\State return $\bB^* = \bB$.
		\EndIf
		\State $\bB_r \leftarrow$ Attempt to reduce $\bB$ by TOD method. \label{line:A*-BP}
		\For{each $\sss \in \bB_r$}
		\If{indices of $\vV(\bB) \cup \{\sss\}$ do not exist in $T$}\label{line:A*-Rep}
		\State Hash indices of $\vV(\bB) \cup \{\sss\}$ into T.
		\State $\bB' \leftarrow \tau(\cC(\bB)\backslash\{\sss\})$.\label{line:A*-expand}
		\State Insert $\bB'$ with priority $e(\bB')$ into $q$.
		\EndIf
		\EndFor
		\EndWhile
		\State Return error (no inlier set of size greater than $p$).
	\end{algorithmic}
	\caption{A* tree search of Chin et al.~\cite{Chin17} for~\eqref{obj:treeSearch}}\label{alg:A*}
\end{algorithm}

\begin{definition}[\textbf{Basis adjacency}]\label{def:basisAdj}
	Two bases $\bB'$ and $\bB$ are adjacent if $\vV(\bB') = \vV(\bB) \cup \{\sss_i\}$ for some $\sss_i \in \bB$.
\end{definition}

\vspace{-0.15em}

Intuitively, $\bB'$ is a direct child of $\bB$ in the tree. We say that we ``follow the edge" from $\bB$ to $\bB'$ when we compute $\tau(\cC(\bB) \backslash \{\sss_i\})$. Chin et al.~\cite{Chin17} solve~\eqref{obj:treeSearch} by searching the tree structure using the A* shortest path finding technique (Algorithm~\ref{alg:A*}). Given input data $\sS$, A* tree search starts from the root node $\tau(\sS)$ and iteratively expands the tree until $\bB^*$ is found. The queue $q$ stores all unexpanded tree nodes. And in each iteration, a basis $\bB$ with the lowest \emph{evaluation} value $e(\bB)$ is expanded. The expansion follows the basis adjacency, which computes~$\tau(\cC(\bB)\backslash\{\sss\})$ for all $\sss \in \bB$ (Line~\ref{line:A*-expand} in Algorithm~\ref{alg:A*}).

The evaluation value is defined as
\begin{align}
e(\bB) = l(\bB) + h(\bB),
\end{align}
where $h(\bB)$ is a \emph{heuristic} which estimates the number of outliers in $\cC(\bB)$. A* search uses only \emph{admissible} heuristics.

\begin{definition}[\textbf{Admissibility}]
	A heuristic $h$ is admissible if $h(\bB) \ge 0$ and $h(\bB) \le h^*(\bB)$, where $h^*(\bB)$ is the minimum number of data that must removed from $\cC(\bB)$ to make the remaining data feasible.
\end{definition}

Note that setting $e(\bB) = l(\bB)$ (i.e., $h(\bB) = 0$) for all $\bB$ reduces A* search to breadth-first search (BFS). With an admissible heuristic, A* search is guaranteed to always find $\bB^*$ before other sub-optimal feasible bases (see~\cite{Chin17} for the proof). Algorithm~\ref{alg:hins} describes the heuristic $h_{ins}$ used in~\cite{Chin17}.

\begin{algorithm}
	\begin{algorithmic}[1]
		\Require $\bB$
		\State If $f(\bB)\le\epsilon$, return 0.
		\State $\oO \leftarrow \emptyset$.
		\While{$f(\bB) > \epsilon$}\label{line:hinsF1}
		\State $\oO \leftarrow \oO \cup \bB$, $\bB \leftarrow \tau(\cC(\bB) \backslash \bB)$.
		\EndWhile
		\State $h_{ins} \leftarrow 0$, $\fF \leftarrow \cC(\bB)$.
		\For{each $\bB \in \oO$}
		\For{each $\sss \in \bB$}
		\State $\bB' \leftarrow \tau(\fF\cup\{\sss\})$.
		\If{$f(\bB')\le \epsilon$}\label{line:hinsF2}
		\State $\fF \leftarrow \fF \cup \{\sss\}$.
		\Else
		\State $h_{ins} \leftarrow h_{ins}+1$, $\fF \leftarrow \fF \cup \{\sss\} \backslash \bB'$.
		\EndIf
		\EndFor
		\EndFor\\
		\Return $h_{ins}$.
	\end{algorithmic}
	\caption{Admissible heuristic $h_{ins}$ for A* tree search}\label{alg:hins}
\end{algorithm}

Intuitively, the algorithm for $h_{ins}$ removes a sequence of bases in the first round of iteration until a feasible subset ${\fF \subseteq \cC(\bB)}$ is found. After that, the algorithm iteratively inserts each removed basis point $\sss$ back into $\fF$. If the insertion of $\sss$ makes $\fF$ infeasible, $\tau(\fF\cup\{\sss\})$ is removed from the expanded $\fF$ and the heuristic value $h_{ins}$ is increased by 1.

The admissibility of $h_{ins}$ is proved in~\cite[Theorem 4]{Chin17}. In brief, denote $\fF^*$ as the largest feasible subset of $\cC(\bB)$. If $\fF\cup\{\sss\}$ is infeasible, $\tau(\fF\cup\{\sss\})$ must contain at least one point in $\fF^*$. Since we only add 1 to $h_{ins}$ when this happens, then $h^*(\bB) \ge h_{ins}(\bB)$.

\subsection{Avoiding redundant node expansions}

Algorithm~\ref{alg:A*} employs two strategies to avoid redundant node expansions. In Line~\ref{line:A*-BP}, before expanding $\bB$, a fast heuristic called True Outlier Detection (TOD)~\cite{Chin17} is used to attempt to identify and remove true outliers from $\bB$ (more details in Sec.~\ref{sec:DIBP}), which has the potential to reduce the size of the branch starting from $\bB$. In Line~\ref{line:A*-Rep}, a repeated basis check heuristic is performed to prevent bases that have been explored previously to be considered again (details in Sec.~\ref{sec:rep}).

Our main contributions are two new strategies that improve upon the original methods above, as we will describe in Secs.~\ref{sec:rep} and~\ref{sec:DIBP}. In each of the sections, we will first carefully analyze the weaknesses of the existing strategies. Sec.~\ref{sec:main} will then put our new strategies in an overall algorithm. Sec.~\ref{sec:exp} presents the results.

\section{Non-adjacent path avoidance}\label{sec:rep}

\begin{figure*}[t]\centering 
	\subfigure[Root node $\bB_{root}$.]{\includegraphics[width=0.24\textwidth]{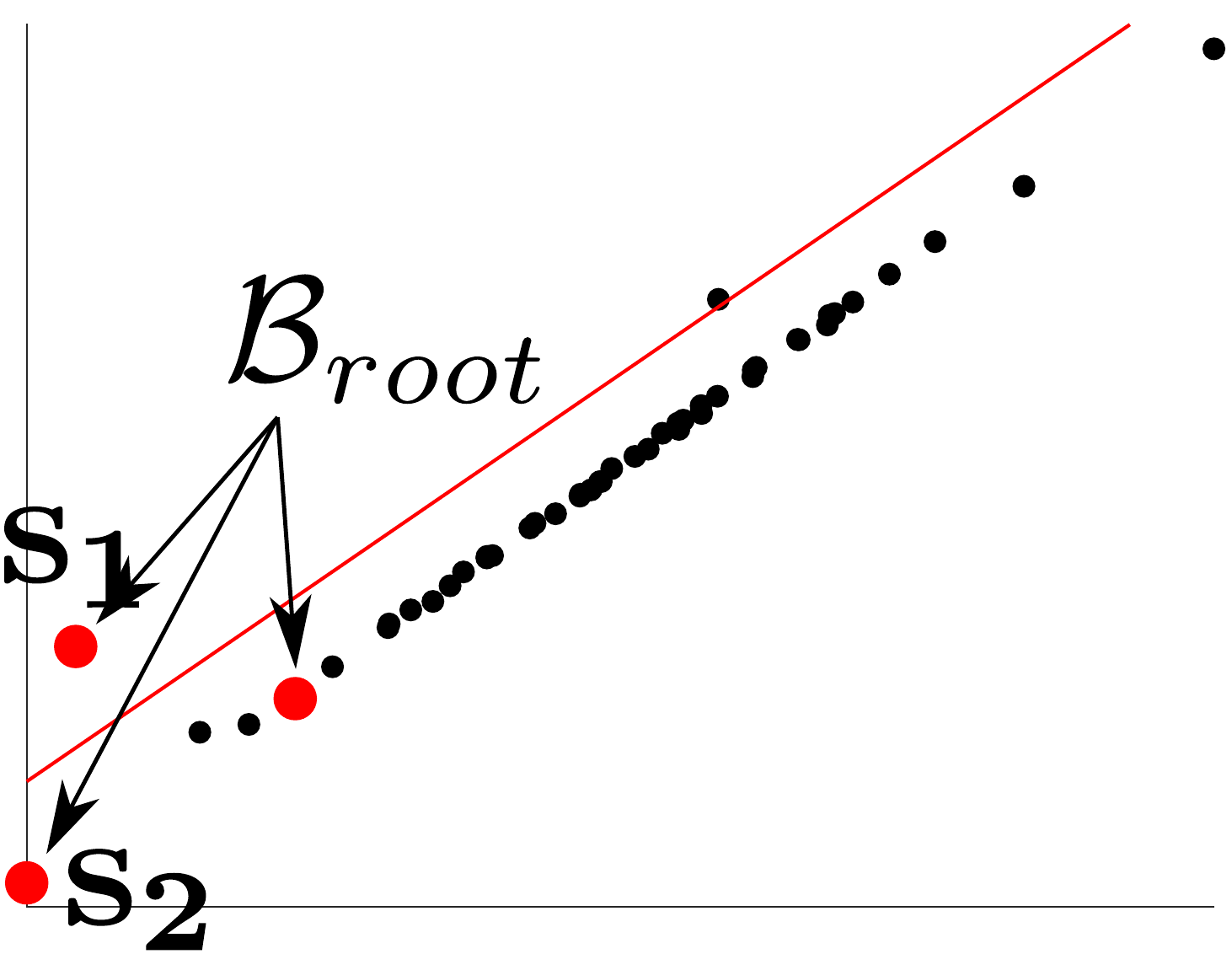}\label{subfig:nonAdjacent1}}
	\subfigure[Level-1 node $\bB$.]{\includegraphics[width=0.24\textwidth]{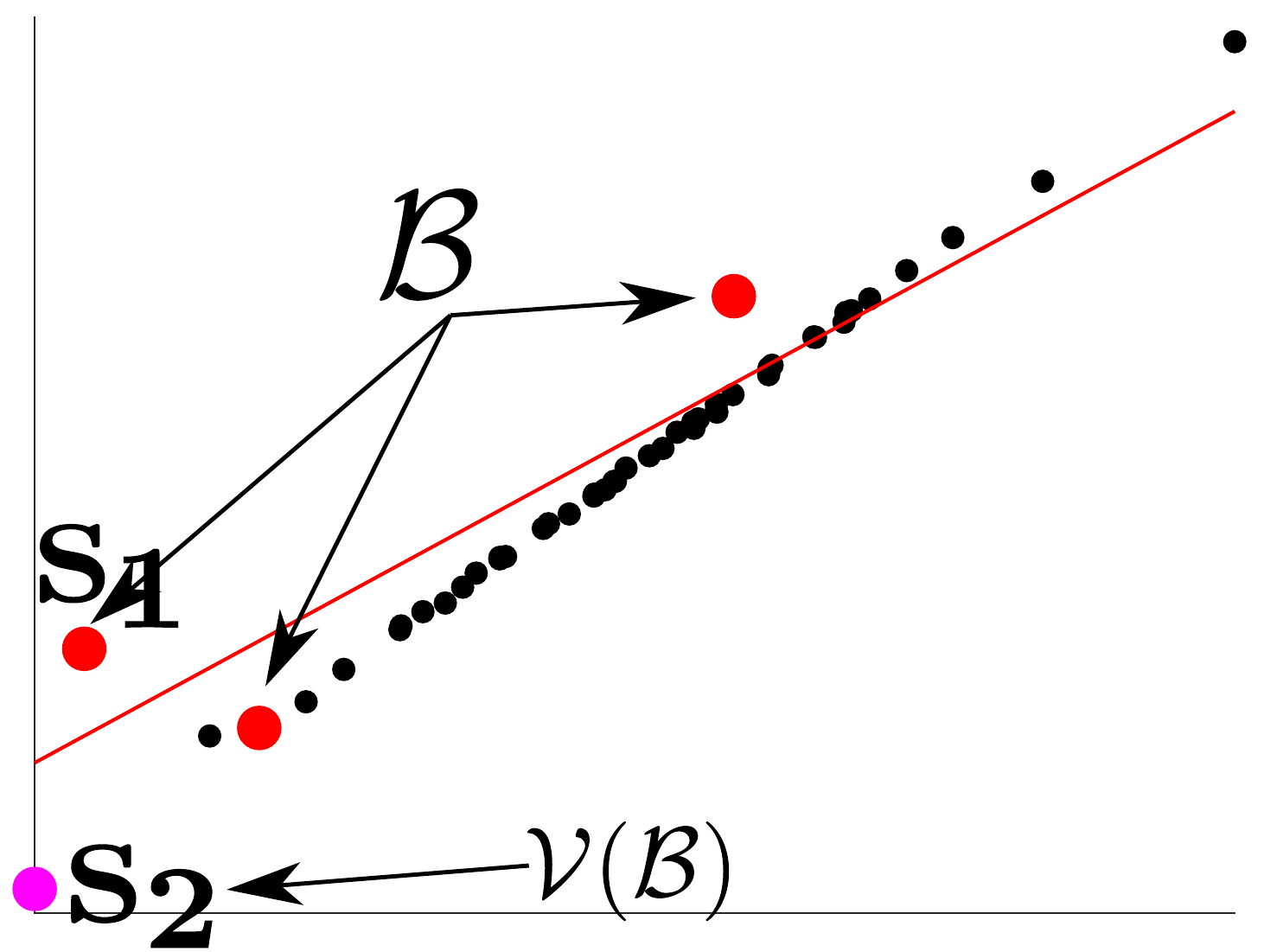}\label{subfig:nonAdjacent2}}
	\subfigure[Level-1 node $\bB$. $\sss_2\in\cC(\bB)$.]{\includegraphics[width=0.24\textwidth]{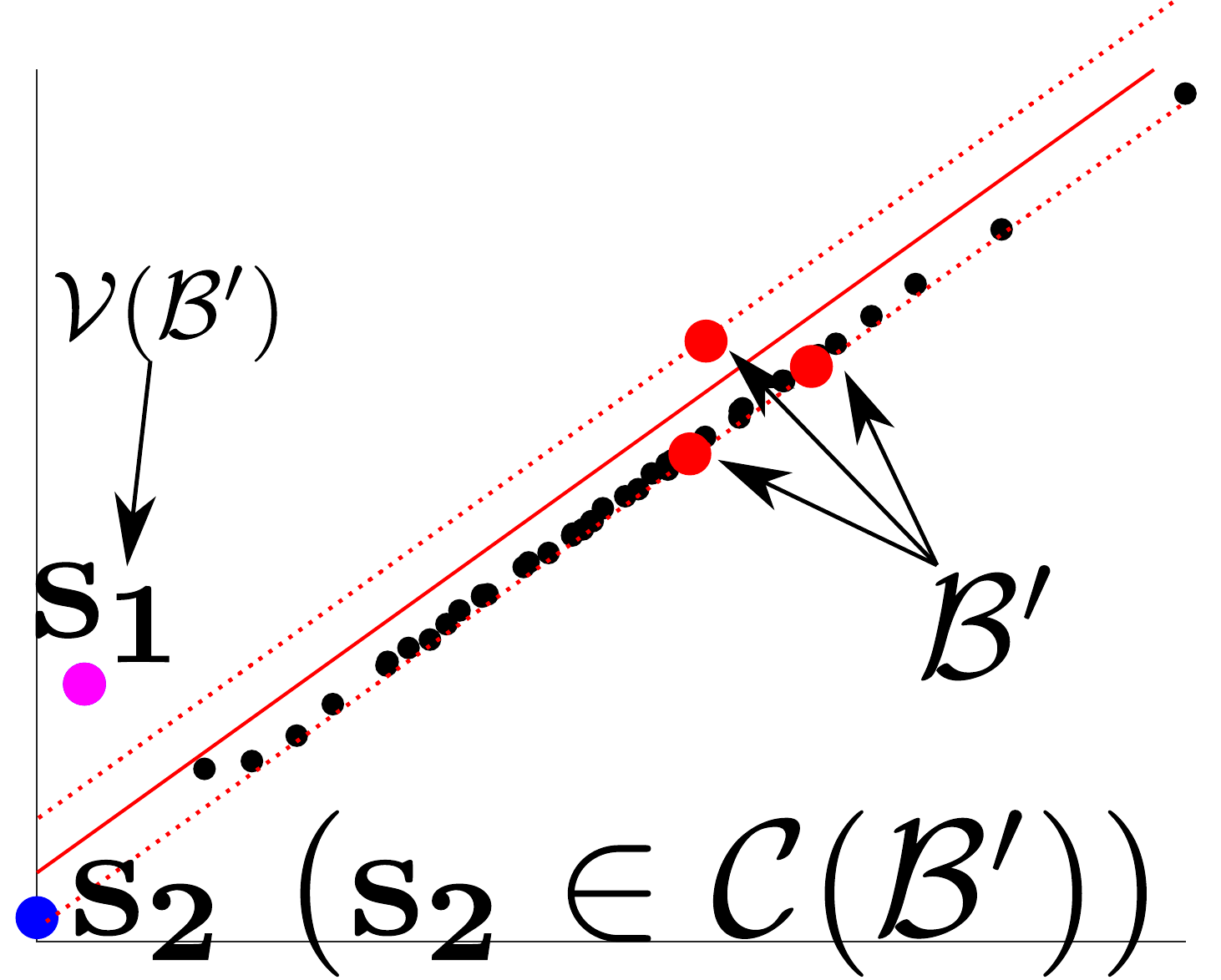}\label{subfig:nonAdjacent3}}
	\subfigure[Tree structure]{\includegraphics[width=0.24\textwidth]{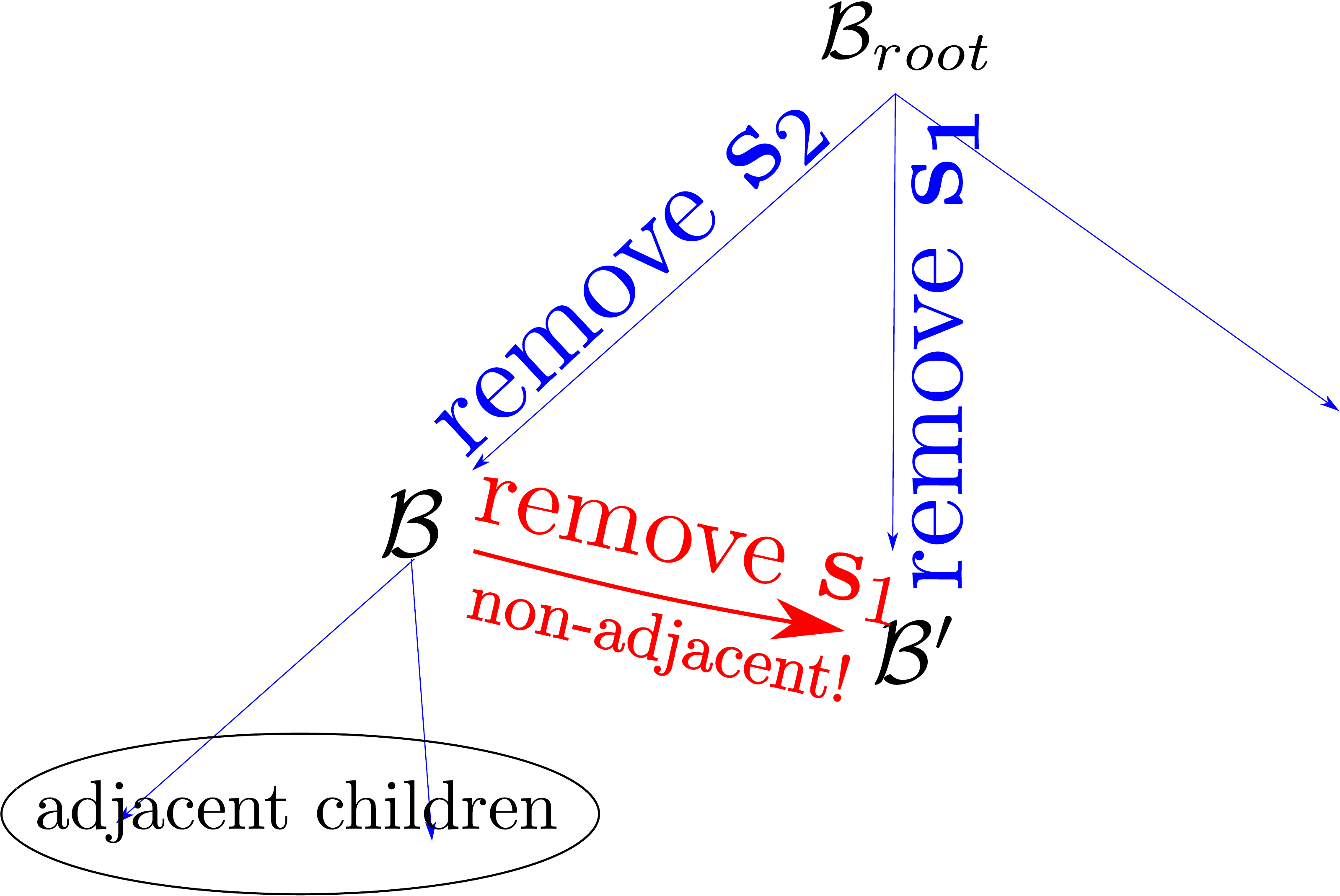}\label{subfig:nonAdjacent4}}
	\caption{(a--c) Path between non-adjacent bases ($\bB \rightarrow \bB'$). $\bB'$ can be generated from both $\bB$ and $\bB_{root}$, but it is \emph{not} adjacent to $\bB$ since $l(\bB') = l(\bB)$. Note that Line~\ref{line:A*-Rep} in Algorithm~\ref{alg:A*} cannot avoid this non-adjacent path since $\vV(\bB')\cup\{\sss_2\} = \{\sss_1, \sss_2 \} \neq \vV(\bB') = \{\sss_1 \}$. Panel (d) shows the relationship between the three bases during tree search. In the proposed Non-Adjacent Path Avoidance (NAPA) strategy, the path drawn in red is not followed. As we will show in Sec.~\ref{sec:exp}, this simple idea provides a massive reduction in runtime of A* tree search.}\label{fig:nonAdjacent}
\end{figure*}


Recall Definition~\ref{def:basisAdj} on adjacency: for $\bB$ and $\bB'$ to be adjacent, their violation sets $\vV(\bB)$ and $\vV(\bB')$ must differ by one point; in other words, it must hold that
\begin{align}
|l(\bB') - l(\bB)| = 1.
\end{align}
Given a $\bB$, Line~\ref{line:A*-expand} in Algorithm~\ref{alg:A*} generates an adjacent ``child" basis of $\bB$ by removing a point $\sss$ from $\bB$ and solving the minimax problem~\eqref{obj:minimax}	on $\cC(\bB)\backslash\{\sss\}$. In this way,
\begin{align}
l(\bB') = l(\bB) + 1.
\end{align}
Iterating the $\sss$ to be removed thus generates all the adjacent child bases of $\bB$, which allows the tree to be explored.

However, an important phenomenon that is ignored in Algorithm~\ref{alg:A*} is, while the above process generates all the adjacent child bases of $\bB$, \emph{not all $\bB'$ generated in the process are adjacent child bases}. Figure~\ref{fig:nonAdjacent} shows a concrete example from line fitting~\eqref{eq:linefitting}: from a root node $\bB_{root}$, two child bases $\bB$ and $\bB'$ are generated by respectively removing points $\sss_2$ and $\sss_1$. However, by further removing $\sss_1$ from $\bB$ and solving~\eqref{obj:minimax} on $\cC(\bB\setminus \{ \sss_1 \})$, we obtain $\bB'$ again! Since $l(\bB') = l(\bB)$, these two bases are not adjacent.

In general, non-adjacent paths occur in Algorithm~\ref{alg:A*} when some elements of $\vV(\bB)$ are in $\cC(\bB')$ after solving the minimax problem on $\cC(\bB\setminus \{ \sss \})$. While inserting a non-adjacent $\bB'$ into the queue does not affect global optimality, it does reduce efficiency. This is because the repeated basis check heuristic in Algorithm~\ref{alg:A*} assumes that the level of the child node $\bB'$ is always lower than the parent $\bB$ by 1; this assumption does not hold if the generated basis $\bB'$ is not adjacent. More formally, if $\bB'$ is not adjacent to $\bB$, then
\begin{align}
\vV(\bB) \cup \{ \sss \} \ne \vV(\bB')
\end{align}
and the repeated basis check in Line~\ref{line:A*-BP} in Algorithm~\ref{alg:A*} fails. Since the same $\bB'$ could be generated from its ``real" parent (e.g., in Figure~\ref{fig:nonAdjacent}, $\bB'$ was also generated by $\bB_{root}$), the same basis can be inserted into the queue more than once.



Since tree search only needs adjacent paths, we can safely skip traversing any non-adjacent path without affecting the final solution. To do this, we propose a Non-Adjacent Path Avoidance (NAPA) strategy for A* tree search; see Fig.~\ref{subfig:nonAdjacent4}. Given a basis $\bB$, any non-adjacent basis generated from it cannot have a level that is higher than $l(\bB)$. Therefore, we can simply discard any newly generated basis $\bB'$ (Line~\ref{line:A*-expand}) if $l(\bB') \le l(\bB)$. Though one redundant minimax problem~\eqref{obj:minimax} still needs to be solved when finding $\bB'$, a much larger cost for computing $e(\bB')$ (which requires to solve multiple problems~\eqref{obj:minimax}) is saved along with all the computation required for traversing the children of $\bB'$. The effectiveness of this strategy will be demonstrated later in Sec.~\ref{sec:exp}.

\section{Dimension-insensitive branch pruning}\label{sec:DIBP}

Our second improvement to A* tree search lies in a new branch pruning technique. We first review the original method (TOD) and then describe our new technique.

\subsection{Review of true outlier detection (TOD)}

\begin{figure}[t]\centering 
	\subfigure[TOD]{\includegraphics[width=0.48\columnwidth]{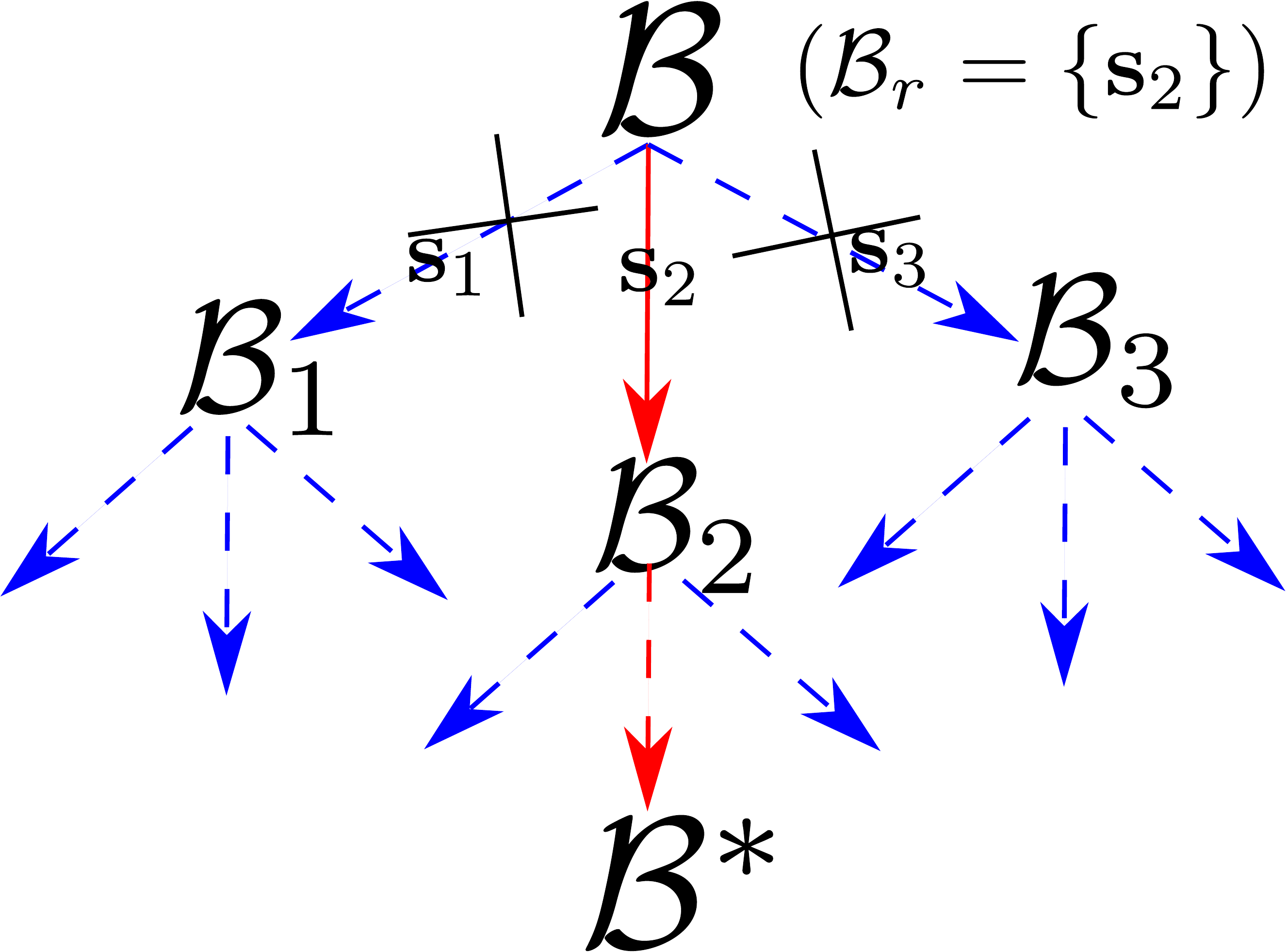}\label{fig:TODIdea}}\hfill
	\subfigure[DIBP]{\includegraphics[width=0.48\columnwidth]{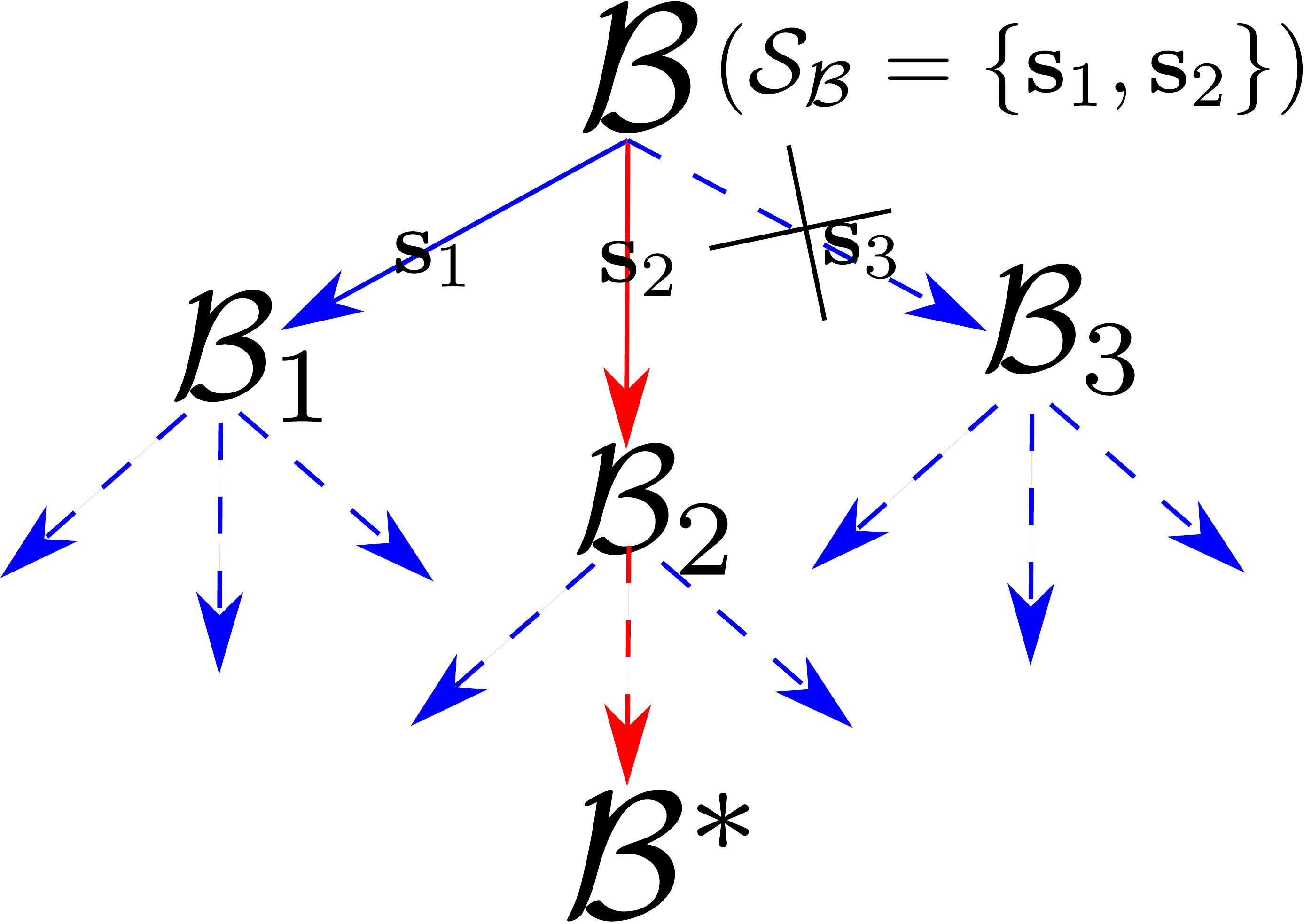}\label{fig:DIBPIdea}}
	\caption{(a) In TOD, on current node $\bB$, if $\sss_2$ is identified as the true outlier, then the shortest path towards a feasible basis $\bB^*$ must pass through $\sss_2$ (path rendered in red). All the the other $|\bB|-1$ branches (leading from $\sss_1$ and $\sss_3$ in this example) can be skipped.		
		(b) In DIBP, instead of attempting to identify a single true outlier, a group $\sS_{\bB}$ that contains at least one true outlier ($\sS_{\bB} = \{\sss_1, \sss_2\}$ in this example) is identified; if this is successful, the other $|\bB|-|\sS_\bB|$ paths (corresponding to $\sss_3$ in this example) can be skipped. DIBP is more effective than TOD because it is easier to reject a subset than a single point as outlier; see Sec.~\ref{sec:DIBP2} for details.}
\end{figure}


Referring to Line~\ref{line:A*-BP} in Algorithm~\ref{alg:A*}~\cite{Chin17}, let $\fF^*$ be the largest feasible subset of $\cC(\bB)$. A point $\sss\in\bB$ is said to be a true outlier if $\sss\notin\fF^*$, otherwise we call it a true inlier. Given an infeasible node $\bB$, one of the elements in $\bB$ must be a true outlier. The goal of TOD is to identify one such true outlier in $\bB$. If $\sss\in\bB$ is successfully identified as a true outlier, we can skip the child generation for all the other points in $\bB$ without hurting optimality, since $\sss$ must be on the shortest path to feasibility via $\bB$; see Fig.~\ref{fig:TODIdea}. If such an $\sss$ can be identified, the reduced subset $\bB_r$ is simply $\{\sss\}$.

The principle of TOD is as follows: define $h^*(\bB | \sss)$ as the \emph{minimum} number of data points that must be removed from $\cC(\bB)$ to achieve feasibility, with $\sss$ forced to be feasible. We can conclude that $\sss\in\bB$ is a true outlier if and only if
\begin{align}\label{eq:TOD*}
&	h^*(\bB | \sss) > h^*(\bB);
\end{align}
see~\cite{Chin17} for the formal proof. Intuitively, if $\sss$ is a true inlier, forcing its feasibility will not change the value of $h^\ast$. On the other hand, if forcing $\sss$ to be feasible leads to the above condition, $\sss$ cannot be a true inlier.

\vspace{-1em}
\paragraph{Bound computation for TOD.}

Unsurprisingly $h^*(\bB | \sss)$ is as difficult to compute as $h^*(\bB)$. To avoid directly computing $h^*(\bB | \sss)$, TOD computes an admissible heuristic $h(\bB|\sss)$ of $h^*(\bB|\sss)$ and an upper bound $g(\bB)$ of $h^*(\bB)$. Given $\sss \in \bB$, $h(\bB|\sss)$ and $g(\bB)$, if
\begin{align}\label{eq:TOD}
&	h(\bB|\sss) > g(\bB),
\end{align}
then it must hold that
\begin{align}
h^*(\bB | \sss) \ge h(\bB|\sss) > g(\bB) \ge h^*(\bB),
\end{align}
which implies that $\sss$ is a true outlier.

As shown in~\cite{Chin17}, $g(\bB)$ can be computed as a by-product of computing $h_{ins}(\bB)$, and $h(\bB|\sss)$ is computed by a constrained version of $h_{ins}$, which we denote as $h_{ins}(\bB|\sss)$. Computing $h_{ins}(\bB|\sss)$ is done by the constrained version of Algorithm~\ref{alg:hins}, where all minimax problems~\eqref{obj:minimax} required to solve are replaced by their constrained versions, which are in the following form:
\begin{subequations}\label{obj:minimaxCon}
	\begin{align}
	& \underset{\btheta}{\text{minimize}}
	& & \max_{\sss_i\in\sS^1} r(\btheta | \sss_i), \\
	& \text{s.t.}
	& & r(\btheta | \sss'_j)\le\epsilon,\ \forall \sss'_j\in\sS'.\label{obj:minmaxCon2}
	\end{align}
\end{subequations}
The only difference between~\eqref{obj:minimaxCon} and~\eqref{obj:minimax} is the constraint that all data in $\sS'$ must be feasible. And similar to~\eqref{obj:minimax},~\eqref{obj:minimaxCon} is also an LP-type problem which can be solved by standard solvers~\cite{eppstein2005quasiconvex}. Similar as in~\eqref{obj:minimax} we also define $f(\sS^1 | \sS')$ as the minimum objective value of~\eqref{obj:minimaxCon} and $\btheta(\sS^1 | \sS')$ as the corresponding optimal solution.

With the above definition, changing Algorithm~\ref{alg:hins} to its constrained version can be simply done by replacing $f(\bB)$ (Line~\ref{line:hinsF1}) and $f(\bB')$ (Line~\ref{line:hinsF2}) by $f(\bB | \{\sss\})$ and $f(\bB' | \{\sss\})$.

\vspace{-1em}
\paragraph{Why is TOD ineffective?}
The effectiveness of TOD in accelerating Algorithm~\ref{alg:A*} depends on how frequent TOD can detect a true outlier. When a true outlier for $\bB$ is detected, TOD  prunes $|\bB|-1$ branches; on the flipside, if TOD cannot identify an $\sss\in\bB$ as the true outlier, the runtime to compute $h_{ins}(\bB|\sss)$ will be wasted. In the worst case where no true outlier is identified for $\bB$, Algorithm~\ref{alg:hins} has to be executed redundantly for $|\bB|$ times. Whether TOD can find the true outlier is largely decided by how well $h_{ins}(\bB|\sss)$ approximates $h^*(\bB|\sss)$.

We now show that $h_{ins}(\bB|\sss)$ is usually a poor estimator of $h^*(\bB|\sss)$. Define $\oO^*(\bB|\sss)$ as the smallest subset that must be removed from $\cC(\bB)\backslash\sss$ to achieve feasibility, with $\sss$ forced to be feasible, i.e., $|\oO^*(\bB|\sss)| = h^*(\bB|\sss)$. Then, $h_{ins}(\bB|\sss)$ and $h^*(\bB|\sss)$ will be different if a basis $\bB_{rem}$ removed during Algorithm~\ref{alg:hins} contains multiple elements in $\oO^*(\bB|\sss)$, since we only add 1 to $h_{ins}$ when actually more than 1 points in $\bB_{rem}$ should be removed. And the following lemma shows that the difference between $h_{ins}(\bB|\sss)$ and $h^*(\bB|\sss)$ will be too large for TOD to be effective if the rate of true outliers in $\cC(\bB)$, i.e., $\frac{h^*(\bB)}{\cC(\bB)}$, is too large.

\begin{lemma}\label{lem:boundH*}
	Condition~\eqref{eq:TOD} is always false when
	\begin{align}\label{eq:boundH*}
	& \frac{h^*(\bB)}{\cC(\bB)} \ge \frac{1}{\phi}\cdot\frac{|\cC(\bB)|-1}{|\cC(\bB)|},
	\end{align}
	where $\phi$ is the \emph{average} size of all $\bB_{rem}$ during Algorithm~\ref{alg:hins}.
\end{lemma}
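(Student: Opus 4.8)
The plan is to establish the negation of~\eqref{eq:TOD}, namely $h(\bB|\sss)\le g(\bB)$, under hypothesis~\eqref{eq:boundH*}. The first move is purely algebraic: clearing denominators, \eqref{eq:boundH*} is equivalent to $\phi\cdot h^*(\bB)\ge|\cC(\bB)|-1$. Since $g(\bB)$ is by construction an upper bound on $h^*(\bB)$ (this is exactly the role it plays in TOD), we get $g(\bB)\ge h^*(\bB)\ge(|\cC(\bB)|-1)/\phi$. It therefore suffices to prove the matching upper bound $h(\bB|\sss)=h_{ins}(\bB|\sss)\le(|\cC(\bB)|-1)/\phi$.

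The core of the argument is a bookkeeping on the size of the working set $\fF$ during the constrained run of Algorithm~\ref{alg:hins} that produces $h(\bB|\sss)$. In the first loop the algorithm strips off a sequence of pairwise-disjoint bases totalling $P$ points, leaving a feasible working set $\fF$ with $|\fF|\le|\cC(\bB)|-P$; the second loop then attempts to re-insert exactly these $P$ points, one at a time. A successful re-insertion raises $|\fF|$ by $1$, while each of the $h_{ins}(\bB|\sss)$ failed re-insertions of a point $\sss'$ replaces $\fF$ by $\fF\cup\{\sss'\}\setminus\bB_{rem}$, changing $|\fF|$ by $1-|\bB_{rem}|$ regardless of whether $\sss'\in\bB_{rem}$. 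Telescoping these $P$ increments, the count of attempts cancels the unit contributions and one is left, for the final working set $\fF$, with $\sum_j|\bB_{rem}^{(j)}|\le|\cC(\bB)|-|\fF|\le|\cC(\bB)|-1$, where the last step uses that $\sss$, being forced feasible, is never removed and hence stays in $\fF$. Because $\phi$ is by definition the average of $|\bB_{rem}^{(1)}|,\dots,|\bB_{rem}^{(h_{ins}(\bB|\sss))}|$, the left-hand sum equals $\phi\cdot h_{ins}(\bB|\sss)$, so $h_{ins}(\bB|\sss)\le(|\cC(\bB)|-1)/\phi$; combined with the first paragraph this gives $h(\bB|\sss)\le g(\bB)$, i.e.~\eqref{eq:TOD} fails.

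I expect this bookkeeping to be the only real obstacle. One must cope with the fact that the removed bases $\bB_{rem}^{(j)}$ can overlap one another and can reclaim points that were re-inserted successfully at an earlier step, so the cancellation only survives in the signed, telescoped form above and not as a union bound over distinct points. One also has to nail down the supporting facts: that the first-loop bases are pairwise disjoint (so $\oO$ is a genuine set and the attempt count is exactly $P$), that $\fF$ stays feasible from one iteration of the second loop to the next (so the case analysis for a failed re-insertion is the only one), and that the first loop leaves $|\fF|\le|\cC(\bB)|-P$. Two smaller points are worth stating explicitly: the average $\phi$ is taken only over the bases $\bB_{rem}$ removed on failed re-insertions; and under the paper's standing non-degeneracy assumption every basis attains the combinatorial dimension $d+1$, so $\phi=d+1$ and \eqref{eq:boundH*} says TOD can never flag $\sss$ once the true-outlier rate of $\cC(\bB)$ reaches roughly $\tfrac{1}{d+1}$, which is precisely the dimension-sensitivity the paper is highlighting. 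Everything outside the bookkeeping---the rearrangement of~\eqref{eq:boundH*} and the bound $g(\bB)\ge h^*(\bB)$---is immediate from the definitions recalled in Sec.~\ref{sec:DIBP}.
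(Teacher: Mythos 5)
Your proposal is correct and follows essentially the same route as the paper: the paper's proof also reduces to the single inequality $h_{ins}(\bB|\sss)\cdot\phi \le |\cC(\bB)\backslash\{\sss\}| = |\cC(\bB)|-1$, combines it with $g(\bB)\ge h^*(\bB)$, and divides by $|\cC(\bB)|$. The only difference is that the paper asserts that counting bound in one line, whereas you justify it via the telescoping bookkeeping on $|\fF|$ --- a welcome elaboration, not a different argument (and your side remark that non-degeneracy forces $\phi=d+1$ in general overstates what the paper claims, which is only that $\phi$ can be as large as $d+1$).
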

\begin{proof}
	Since $h_{ins}(\bB|\sss)$ is the number of $\bB_{rem}$ during Algorithm~\ref{alg:hins}, $h_{ins}(\bB|\sss)\cdot\phi \le |\cC(\bB)\backslash\{\sss\}| = |\cC(\bB)|-1$. Hence,
	\begin{align}\label{eq:hins}
	&	h_{ins}(\bB|\sss) \le \frac{|\cC(\bB)|-1}{\phi},
	\end{align}
	Therefore, condition~\eqref{eq:TOD} can never be true if
	\begin{align}\label{eq:lemmaBoundH*}
	& h^*(\bB) \ge \frac{|\cC(\bB)|-1}{\phi}.
	\end{align}
	Dividing both sides of~\eqref{eq:lemmaBoundH*} by $\cC(\bB)$ leads to \eqref{eq:boundH*}.
\end{proof}
Intuitively, when~\eqref{eq:boundH*} happens, there are too many outliers in $\cC(\bB)$ hence too many $\bB_{rem}$ that include multiple elements in $\oO^*(\bB|\sss)$, making $h_{ins}(\bB|\sss)$ too far from $h^*(\bB|\sss)$.

In addition, $\phi$ is positively correlated with $d$, and in the worst case can be $d+1$, which makes TOD sensitive to $d$. Figure~\ref{fig:TOD} shows the effectiveness of TOD as a function of $d$, for problems with linear residual~\eqref{eq:linefitting}. As can be seen, the outlier rate where TOD can be effective reduces quickly with $d$ ($<\!15\%$ when $d\! \ge\! 7$). Note that since $g(\bB)$ is only an estimation of $h^*(\bB)$, the actual range where TOD is effective can be smaller than the region above the dashed line.

\begin{figure}[t]\centering 
	\subfigure{\includegraphics[width=0.99\columnwidth,height=10em]{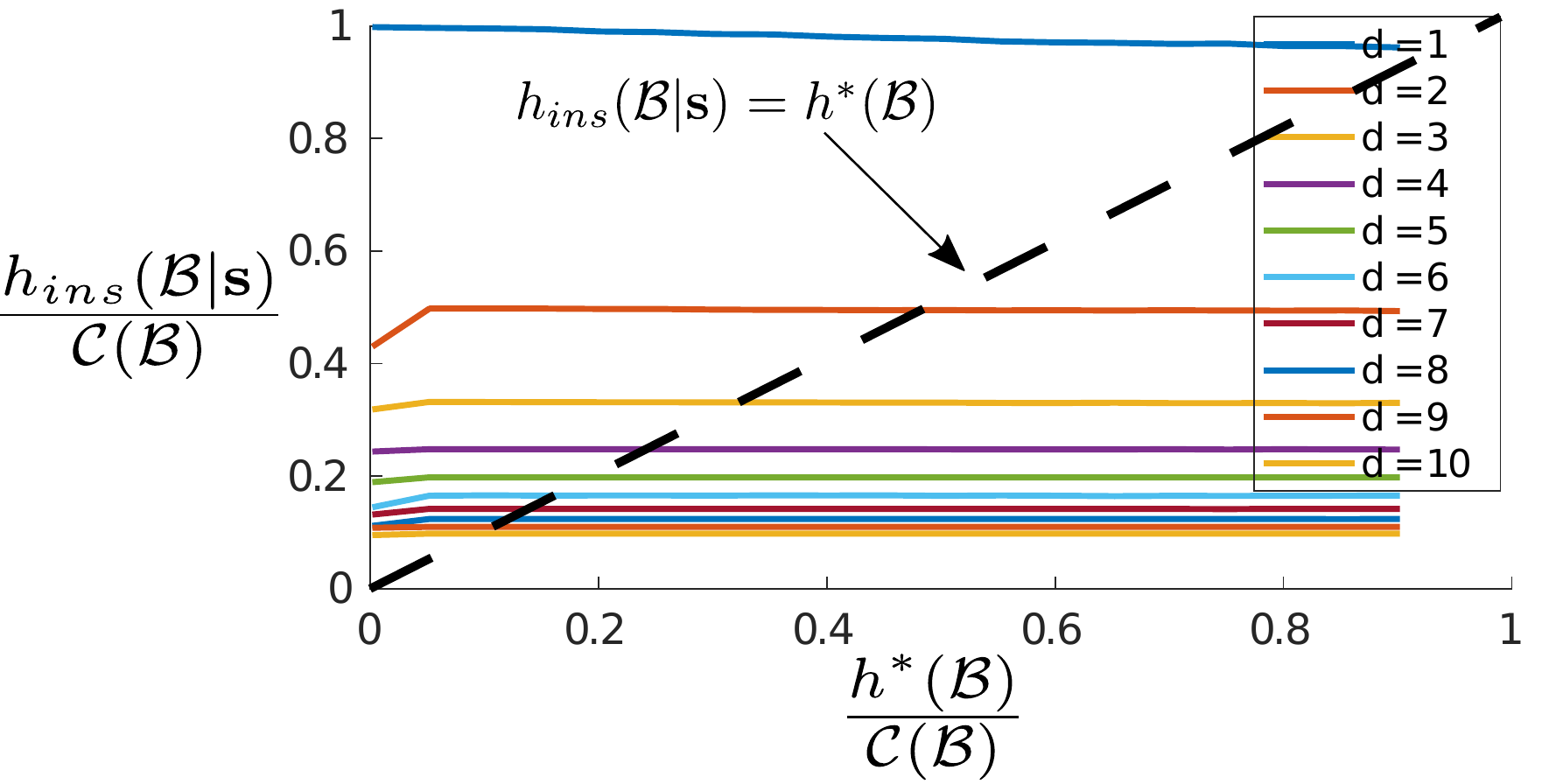}\label{subfig:nonAdjacent1}}
	\caption{Effectiveness of TOD as a function of $d$. All problem instances are generated randomly and each solid curve contains data with true outlier rates $\frac{h^*(\bB)}{\cC(\bB)}$ from 0 to $90\%$. Note that \eqref{eq:boundH*} is true for a $d$ when the solid curve for the $d$ is below the dashed line.}
	\label{fig:TOD}
\end{figure}

\subsection{New pruning technique: DIBP}
\label{sec:DIBP2}


Due to the above limitation, TOD is often not effective in pruning; the cost to carry out Line~\ref{line:A*-BP} in Algorithm~\ref{alg:A*} is thus usually wasted. To address this issue, we propose a more effective branch pruning technique called DIBP (dimension-insensitive branch pruning).

DIBP extends the idea of TOD, where instead of searching for one true outlier, we search for a \emph{subset} $\sS_{\bB}$ of $\bB$ that must contain at least one true outlier. If such a subset can be identified, the children of $\bB$ corresponding to removing points not in $\sS_{\bB}$ can be ignored during node expansion---again, this is because the shortest path to feasibility via $\bB$ must go via $\sS_{\bB}$; Fig.~\ref{fig:DIBPIdea} illustrates this idea.

To find such an $\sS_{\bB}$, we greedily add points from $\bB$ into $\sS_{\bB}$ to see whether enforcing the feasibility of $\sS_{\bB}$ contradicts the following inequality
\begin{align}\label{eq:DIBP}
&	h_{ins}(\bB | \sS_{\bB}) > g(\bB),
\end{align}
which is the extension of~\eqref{eq:TOD}, with $h = h_{ins}$. Similar to $h_{ins}(\bB|\sss)$, $h_{ins}(\bB | \sS_{\bB})$ is computed by the constrained version of Algorithm~\ref{alg:hins} with $\sS' = \sS_{\bB}$ in problem~\eqref{obj:minimaxCon}.

\begin{figure}[t]\centering 
	\subfigure{\includegraphics[width=0.99\columnwidth,height=10em]{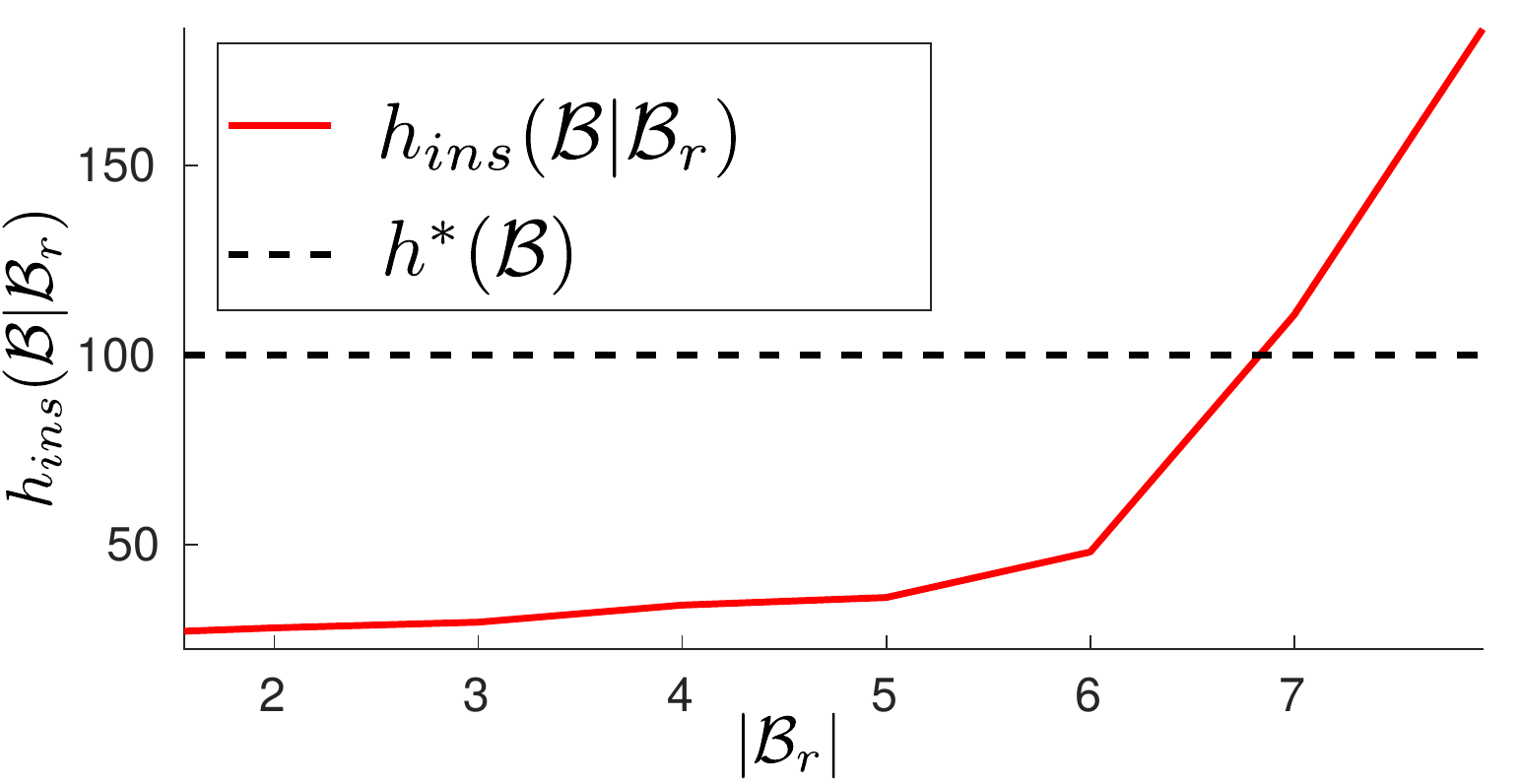}\label{subfig:nonAdjacent1}}
	\caption{Effectiveness of DIBP when $d = 8$. $|\cC(\bB)| = 200$. $h_{ins}(\bB | \sS_{\bB})$ increases stably along with $|\sS_{\bB}|$ and is effective even when the true outlier rate is $90\%$. Though only the $50\%$ case is shown, changing the outlier rate in practice merely affects the values of $h_{ins}(\bB | \sS_{\bB})$ as long as the data distribution is similar.}
	\label{fig:DIBP}
\end{figure}

The insight is that by adding more and more constraints into problem~\eqref{obj:minimaxCon}, the average basis size $\phi$ will gradually reduce, making the right hand side of~\eqref{eq:boundH*} increase until it exceeds the left hand side, so that even with large $d$, branch pruning will be effective with high true outlier rate. Figure~\ref{fig:DIBP} shows the effectiveness of DIBP for an 8-dimensional problem with linear residuals. Observe that $h_{ins}(\bB | \sS_{\bB})$ increases steadily along with $|\sS_{\bB}|$ and can tolerate more than $90\%$ of true outliers when $|\sS_{\bB}| = |\bB| - 1 = 8$.

During DIBP, we want to add true outliers into $\sS_{\bB}$ as soon as possible, since~\eqref{eq:DIBP} can never be true if $\sS_{\bB}$ contains no true outliers. To do so, we utilize the corresponding solution $\btheta_{g(\bB)}$ that leads to $g(\bB)$. During DIBP, the $\sss\in\bB$ with the largest residual $r(\btheta_{g(\bB)}|\sss)$ will be added into $\sS_{\bB}$ first, since a larger residual means a higher chance that $\sss$ is a true outlier. In practice, this strategy often enables DIBP to find close to minimal-size $\sS_{\bB}$.

For problems with linear residuals, we can further compute an adaptive starting value $z(\bB)$ of $|\sS_{\bB}|$, where DIBP can safely skip the first $z(\bB)-1$ computations of $h_{ins}(\bB|\sS_{\bB})$ without affecting the branch pruning result. The value of $z(\bB)$ should be $\max\{1, d+2-\frac{|\cC(\bB)|-1}{g(\bB)}\}$. The reason is demonstrated in the following lemma:
\begin{lemma}\label{lem:adap}
	For problems with linear residuals,~\eqref{eq:DIBP} cannot be true unless
	\begin{align}\label{eq:|Br|}
	|\sS_{\bB}| > d+1 - \frac{|\cC(\bB)|-1}{g(\bB)}.
	\end{align}
\end{lemma}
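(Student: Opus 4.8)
The plan is to reuse the averaging argument behind Lemma~\ref{lem:boundH*}, but to exploit that, for a linear residual, every basis removed inside the constrained version of Algorithm~\ref{alg:hins} is forced to be \emph{large} once the constraint set $\sS_{\bB}$ is fixed; hence only few such bases fit into $\cC(\bB)$, and $h_{ins}(\bB|\sS_{\bB})$ stays below $g(\bB)$ unless $|\sS_{\bB}|$ is already big. Write $h=h_{ins}(\bB|\sS_{\bB})$ and let $\bB_{rem}^1,\dots,\bB_{rem}^{h}$ be the bases triggering the $h$ increments, so that their number is exactly $h$. Two easy cases are cleared first. If $\sS_{\bB}=\emptyset$ then $h=h_{ins}(\bB)\le h^\ast(\bB)\le g(\bB)$, so \eqref{eq:DIBP} fails and the claim is vacuous; and if $d+1-|\sS_{\bB}|\le 0$ the right-hand side of \eqref{eq:|Br|} is non-positive while $|\sS_{\bB}|\ge 1$, so the claim is trivially true. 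Hence assume $1\le|\sS_{\bB}|$ and $d+1-|\sS_{\bB}|\ge 1$.

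The core step is the per-basis bound $|\bB_{rem}^{k}|\ge d+1-|\sS_{\bB}|$ for every $k$. By construction $\bB_{rem}^{k}$ is the support set of a constrained minimax \eqref{obj:minimaxCon} whose optimal value $t^\ast$ satisfies $t^\ast>\epsilon$ (that inequality is precisely what triggers the increment). For a linear residual, \eqref{obj:minimaxCon} is a linear program in the $d+1$ variables $(\btheta,t)$, so by the standing non-degeneracy assumption it has a unique optimum, which is therefore a vertex of the feasible polyhedron at which at least $d+1$ inequality constraints are active; moreover, because $\bB_{rem}^{k}$ is the support, this same vertex is the optimum of the minimax taken over the objective set $\bB_{rem}^{k}$ alone with the same constraints, so every active objective inequality is contributed by a point of $\bB_{rem}^{k}$. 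Since $t^\ast>\epsilon>0$, no single datum can make both of its two associated inequalities active at once (that would force $t^\ast=0$ for an objective datum, or $\epsilon=0$ for a constrained datum). Hence the $\ge d+1$ active inequalities are split into at most one per point of $\bB_{rem}^{k}$ and at most one per point of $\sS_{\bB}$, which gives $|\bB_{rem}^{k}|+|\sS_{\bB}|\ge d+1$.

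It remains to control $\sum_{k=1}^{h}|\bB_{rem}^{k}|$, which I do exactly as in the proof of Lemma~\ref{lem:boundH*}: the points of $\sS_{\bB}$ are forced feasible and so never enter any support (their residual at the optimum is $\le\epsilon<t^\ast$), while $\bB_{rem}^1,\dots,\bB_{rem}^{h}$ are pairwise disjoint subsets of $\cC(\bB)\setminus\sS_{\bB}$; therefore $\sum_{k}|\bB_{rem}^{k}|\le|\cC(\bB)\setminus\sS_{\bB}|\le|\cC(\bB)|-1$, the last step using $|\sS_{\bB}|\ge 1$. Combining the two bounds, $h\,(d+1-|\sS_{\bB}|)\le\sum_{k}|\bB_{rem}^{k}|\le|\cC(\bB)|-1$, i.e.\ $h\le(|\cC(\bB)|-1)/(d+1-|\sS_{\bB}|)$. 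Because $\bB$ is infeasible whenever Line~\ref{line:A*-BP} of Algorithm~\ref{alg:A*} is executed, $g(\bB)\ge h^\ast(\bB)\ge 1>0$, so \eqref{eq:DIBP}, namely $h>g(\bB)$, can hold only when $g(\bB)<(|\cC(\bB)|-1)/(d+1-|\sS_{\bB}|)$; since $d+1-|\sS_{\bB}|>0$, this rearranges to $d+1-|\sS_{\bB}|<(|\cC(\bB)|-1)/g(\bB)$, which is exactly \eqref{eq:|Br|}.

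I expect the middle step to be the only real obstacle, since it is the one place the linearity hypothesis is used: it rests on the LP-vertex characterisation together with the strict positivity of $\epsilon$ and of the infeasible optimal value $t^\ast$, so that each datum absorbs at most one active half-space. For a genuinely quasi-convex residual of the form \eqref{eq:pseudoConvex} the constrained minimax is LP-type but not a linear program, this ``one active constraint per datum'' accounting breaks down, and a basis can have size $d+1$ regardless of $|\sS_{\bB}|$---which is why the statement is restricted to linear residuals. The disjointness claim and the averaging inequality are inherited verbatim from the proof of Lemma~\ref{lem:boundH*} and require no new work.
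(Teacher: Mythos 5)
Your proof is correct and follows essentially the same route as the paper's: bound the number of increments of $h_{ins}(\bB|\sS_{\bB})$ by a counting argument as in Lemma~\ref{lem:boundH*}, and use the LP-vertex characterisation (at least $d+1$ active constraints in the $d+1$ variables, with each forced-feasible datum of $\sS_{\bB}$ contributing at most one) to show every removed basis has size at least $d+1-|\sS_{\bB}|$, then combine with $h_{ins}(\bB|\sS_{\bB})>g(\bB)$. The only difference is cosmetic: you apply the size bound per removed basis and sum, and spell out the edge cases and the ``one active inequality per datum'' justification, where the paper phrases the same argument through the average basis size $\phi\ge d+1-|\sS_{\bB}|$ substituted into~\eqref{eq:boundPhi}.
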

\begin{proof}
	As in~\eqref{eq:hins}, we have $h_{ins}(\bB|\sS_{\bB}) < \frac{|\cC(\bB)|-1}{\phi}$. To ensure that~\eqref{eq:DIBP} can be true, we must have  $g(\bB)<\frac{|\cC(\bB)|-1}{\phi}$, which we rewrite as
	\begin{align}\label{eq:boundPhi}
	& \phi < \frac{|\cC(\bB)|-1}{g(\bB)}.
	\end{align}
	And for problems with linear residuals,~\eqref{obj:minimaxCon} with $\sS' = \sS_{\bB}$ is a linear program, whose optimal solution resides at a vertex of the feasible polytope~\cite[Chapter 13]{nocedal2006numerical}. This means that for problem~\eqref{obj:minimaxCon}, the
	basis size plus the number of active constraints at the optimal solution must be $d+1$. And since each absolute-valued constraint in~\eqref{obj:minmaxCon2} can at most contribute one active linear constraint, the maximum number of active constraints is $|\sS_{\bB}|$.
	Thus during the computation of $h_{ins}(\bB|\sS_{\bB})$, the average basis size $\phi \ge d+1-|\sS_{\bB}|$. Substituting this inequality into~\eqref{eq:boundPhi} results in~\eqref{eq:|Br|}.
\end{proof}

\section{Main algorithm}
\label{sec:main}

Algorithm~\ref{alg:A*-DIBP} summarizes the A* tree search algorithm with our new acceleration techniques. A reordering is done so that cheaper acceleration techniques are executed first. Specifically, given the current basis $\bB$, we iterate through each element $\sss\in\bB$ and check first whether it leads to a repeated adjacent node and skip $\sss$ if yes (Line~\ref{line:A*New-Rep}). Otherwise, we check whether the node $\bB'$ generated by $\sss$ is non-adjacent to $\bB$ and discard $\bB'$ if yes (Line~\ref{line:A*New-NAPA}). If not, we insert $\bB'$ into the queue since it cannot be pruned by other techniques. After that, we perform DIBP (Line~\ref{line:A*New-DIBP}) and skip the other elements in $\bB$ if condition~\eqref{eq:DIBP} is satisfied. Note that we can still add $\sss$ into $\sS_\bB$ even though it leads to repeated bases. This strategy makes DIBP much more effective in practice.
%
\begin{algorithm}
	\begin{algorithmic}[1]
		\Require $\sS = \{ \sss_i \}^{N}_{i=1}$, threshold $\epsilon$.
		\State Insert $\bB = \tau(\sS)$ with priority $e(\bB)$ into queue $q$.
		\State Initialize hash table $T$ to NULL.
		\While{$q$ is not empty}
		\State Retrieve from $q$ the $\bB$ with the lowest $e(\bB)$.
		\State \textbf{If} $f(\bB) \le \epsilon$ \textbf{then} return $\bB^* = \bB$.
		\State $\sS_{\bB} \leftarrow \emptyset$; Sort $\bB$ descendingly based on $r(\btheta_{g(\bB)}|\sss)$.
		\For{each $\sss \in \bB$}
		\If{indices of $\vV(\bB) \cup \{\sss\}$ do not exist in $T$}\label{line:A*New-Rep}
		\State Hash indices of $\vV(\bB) \cup \{\sss\}$ into T.
		\State $\bB' \leftarrow \tau(\cC(\bB)\backslash\{\sss\})$.
		\If{$l(\bB') > l(\bB)$}.\label{line:A*New-NAPA}
		\State $\sS_{\bB} \leftarrow \sS_{\bB} \cup \{\sss\}$.
		\State Insert $\bB'$ with priority $e(\bB')$ into $q$.
		\State \textbf{If} $|\sS_{\bB}| = |\bB|$ $\vee$ \eqref{eq:DIBP} is true \textbf{then} break.\label{line:A*New-DIBP}
		\EndIf
		\Else
		\State $\sS_{\bB} \leftarrow \sS_{\bB} \cup \{\sss\}$.
		\EndIf
		\EndFor
		\EndWhile
		\State Return error (no inlier set of size greater than $p$).
	\end{algorithmic}
	\caption{A* tree search with NAPA and DIBP}\label{alg:A*-DIBP}
\end{algorithm}

\section{Experiments}
\label{sec:exp}

To demonstrate the effectiveness of our new techniques, we compared the following A* tree search variants:
\begin{itemize}[leftmargin=1em,itemsep=0em,parsep=0em,topsep=0.5em]
	\item Original A* tree search (A*)~\cite{Chin15}.
	\item A* with TOD for branch pruning (A*-TOD)~\cite{Chin17}.
	\item A* with non-adjacenct path avoidance (A*-NAPA).
	\item A*-NAPA with TOD branch pruning (A*-NAPA-TOD).
	\item A*-NAPA with DIBP branch pruning (A*-NAPA-DIBP).
\end{itemize}

All variants were implemented in MATLAB 2018b, based on the original code of A*.
For problems with linear residuals, we use the self-implemented vertex-to-vertex algorithm~\cite{cheney66} to solve the minimax problems~\eqref{obj:minimax} and~\eqref{obj:minimaxCon}. And in the non-linear case, these two problems were solved by the matlab function \texttt{fminimax}. All experiments were executed on a laptop with Intel Core 2.60GHz i7 CPU, 16GB RAM and Ubuntu 14.04 OS.

\subsection{Controlled experiment on synthetic data}

\begin{figure*}
	\begin{minipage}{0.60\textwidth}
		\subfigure[$N = 200$]{\includegraphics[width=0.5\textwidth,height=13.5em]{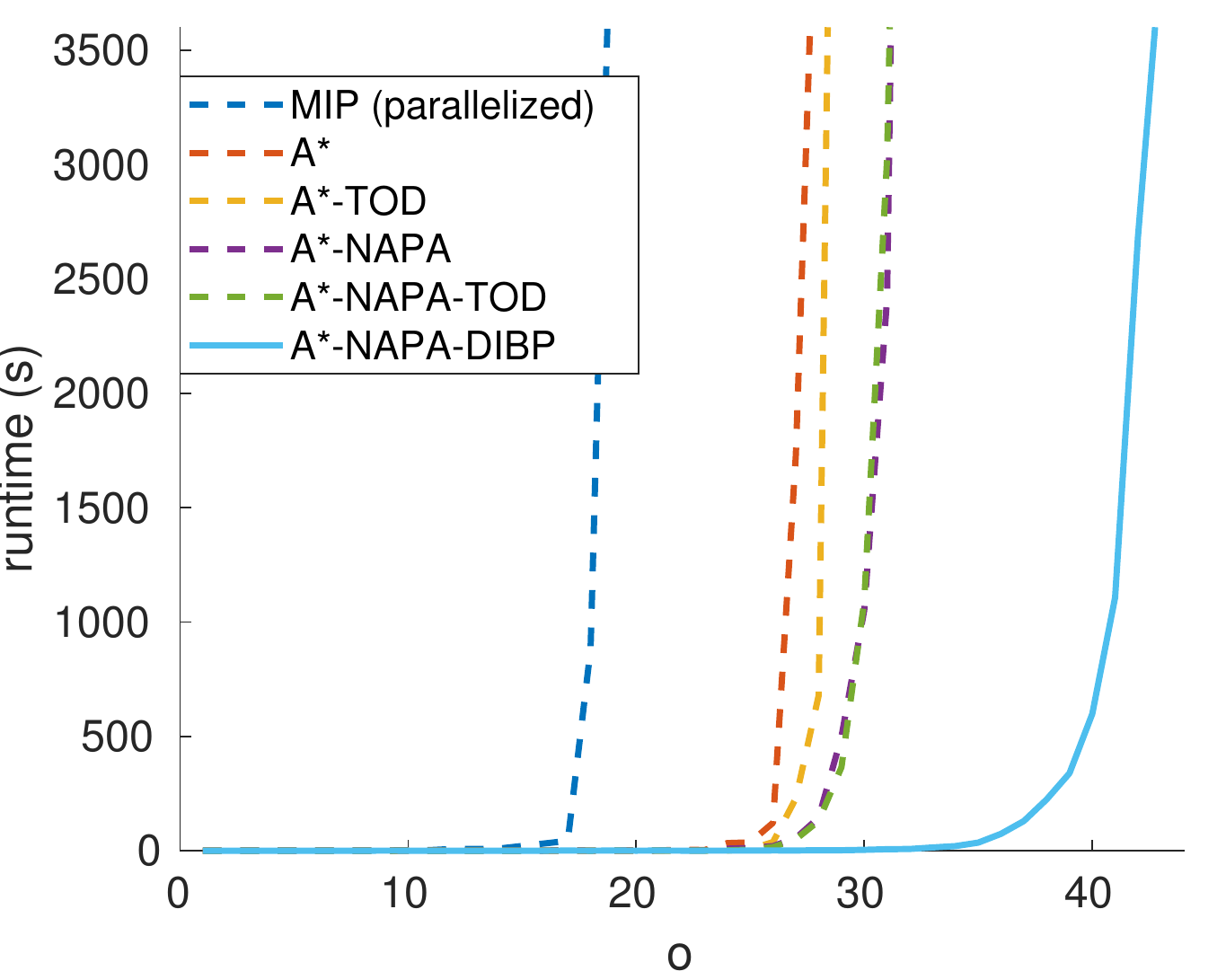}\label{fig:expLinearSynth(b)}}
		\subfigure[$N = 400$]{\includegraphics[width=0.5\textwidth,height=13.5em]{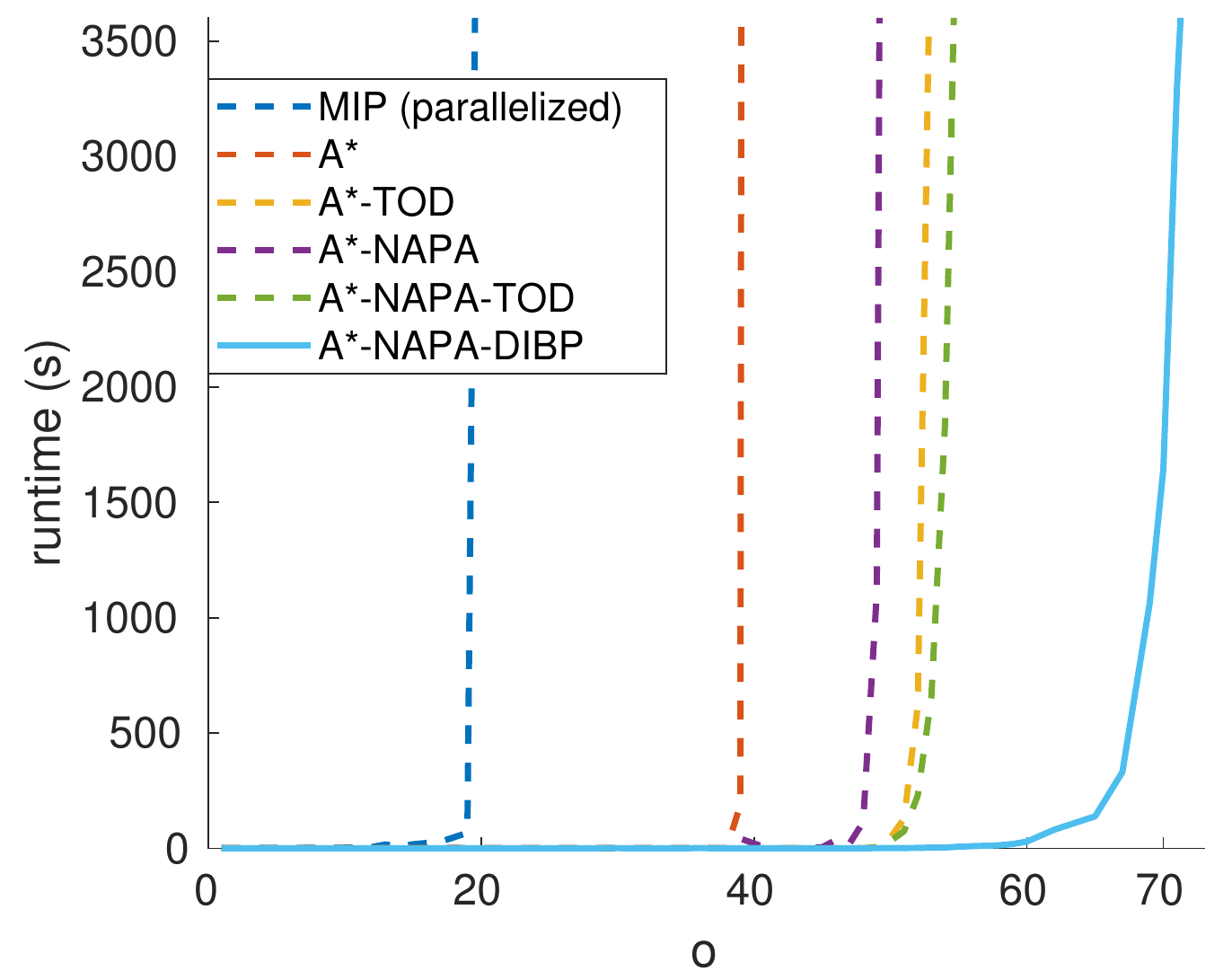}\label{fig:expLinearSynth(C)}}
		\caption{Runtime vs $o$ for robust linear regression on synthetic data. $d = 8$.}
		\label{fig:expLinearSynth}
	\end{minipage}	
	\begin{minipage}{0.39\textwidth}
		\subfigure{\includegraphics[width=1\textwidth]{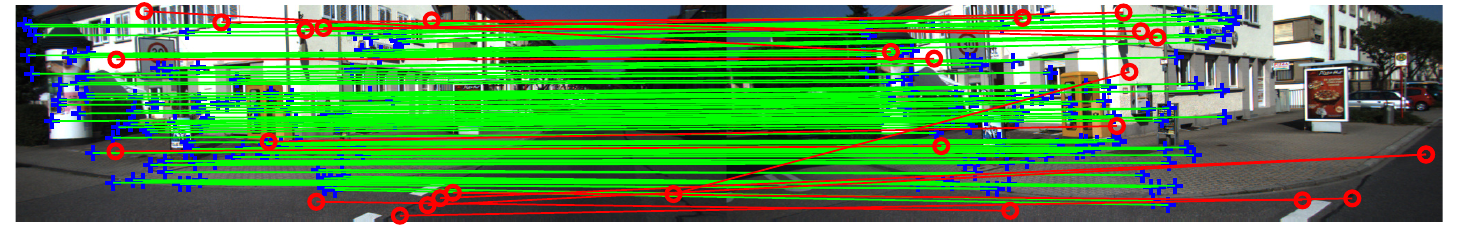}\label{fig:expFund}}
		
		\vspace{-1em}
		
		\subfigure{\includegraphics[width=1\textwidth]{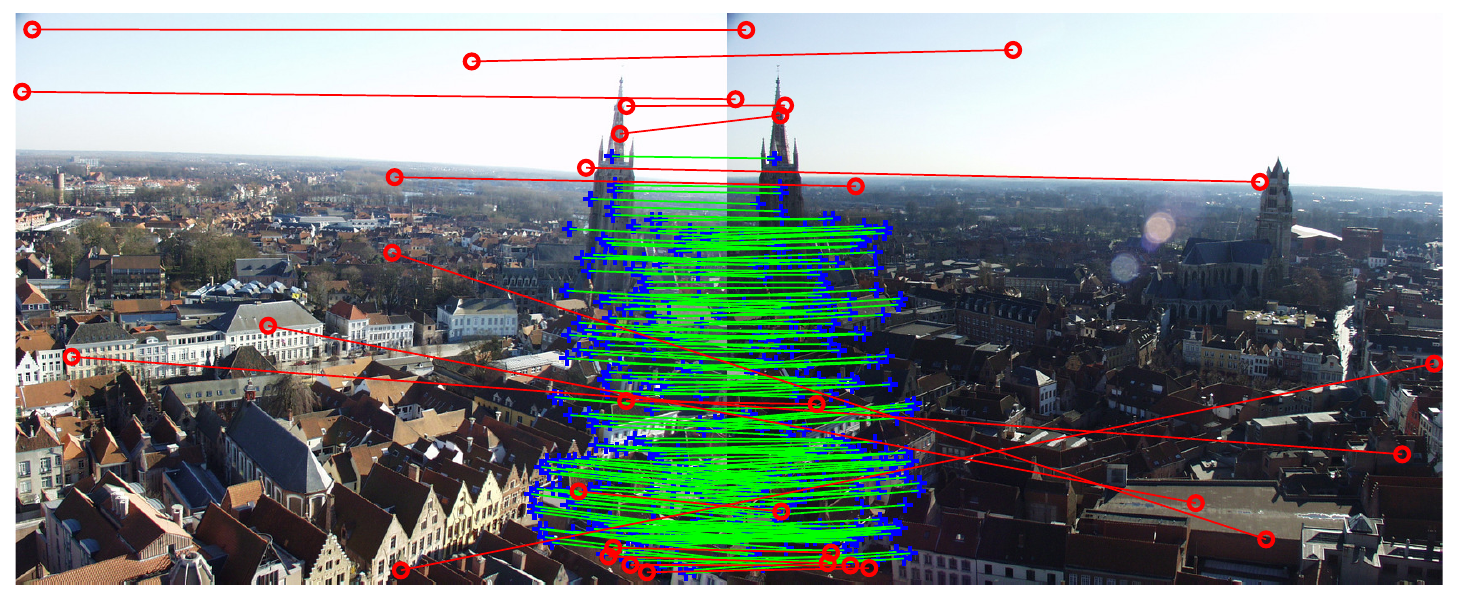}\label{fig:expHomoASTAR}}
		\caption{(Top) Fundamental matrix estimation result of A*-NAPA-DIBP on \texttt{Frame-738-742}. (Bottom) Homography estimation result of A*-NAPA-DIBP on data \texttt{BruggeTower}. The inliers (in green) in the top figure were down-sampled to 100 for clarity.}\label{fig:expReal}
	\end{minipage}	
\end{figure*}

To analyze the effect of $o$ and $N$ to different methods, we conducted a controlled experiment on the 8-dimensional robust linear regression problem with different $N$ and $o$. The residual of linear regression is in the form of~\eqref{eq:linefitting}. To generate data $\sS = \{\aa_i,b_i\}_{i=1}^N$, a random model $\btheta\in\Re^d$ was first generated and $N$ data points that perfectly fit the model were randomly sampled. Then, we randomly picked $N-o$ points as inliers and assigned to the $b_i$ of these points noise uniformly distributed between $[-0.1,0.1]$. Then we assigned to the other $o$ points noise uniformly distributed from $[-5,-0.1)\cup(0.1,5]$ to create a controlled number of outliers. The inlier threshold $\epsilon$ was set to 0.1.

To verify the superior efficiency of tree search compared to other types of globally optimal methods, we also tested the Mixed Integer Programming-based BnB algorithm (MIP)~\cite{zheng11} in this experiment. The state-of-the-art Gurobi solver was used as the optimizer for MIP. MIP was  parallelized by Gurobi using 8 threads, while all tree search methods were executed sequentially.

As shown in Figure~\ref{fig:expLinearSynth}, all A* tree search variants are much faster than MIP, even though MIP was significantly accelerated by parallel computing. Both NAPA and DIBP brought considerable acceleration to A* tree search, which can be verified by the gaps between the variants with and without these techniques. Note that when $N = 200$, A*-NAPA had similar performance with and without TOD, while DIBP provided stable and significant acceleration for all data.

Interestingly, having a larger $N$ made A* tree search efficient for a much larger $o$. This can be explained by condition~\eqref{eq:boundH*}. With the same $o$, a larger $N$ meant a lower true outlier rate, which made~\eqref{eq:boundH*} less likely.

\subsection{Linearized fundamental matrix estimation}\label{sec:expFund}

\begin{table*}[t]\centering
	\ra{1.05}
	\resizebox{1\linewidth}{!}{
		\small{
			\begin{tabular}{|c|c|c|c|c|c|c|c|c|c|c|}
				\hline
				data & \multicolumn{2}{c|}{\texttt{Frame-104-108}} & \multicolumn{2}{c|}{\texttt{Frame-198-201}} & \multicolumn{2}{c|}{\texttt{Frame-417-420}} & \multicolumn{2}{c|}{\texttt{Frame-579-582}} & \multicolumn{2}{c|}{\texttt{Frame-738-742}} \\
				$d = 8$ & \multicolumn{2}{c|}{$o = 13$ $(o_{LRS} = 23)$; $N = 302$} & \multicolumn{2}{c|}{$o = 13$ $(o_{LRS} = 19)$; $N = 309$} & \multicolumn{2}{c|}{$o = 19$ $(o_{LRS} = 23)$; $N = 385$} & \multicolumn{2}{c|}{$o = 22$ $(o_{LRS} = 25)$; $N = 545$} & \multicolumn{2}{c|}{$o = 14$ $(o_{LRS} = 32)$; $N = 476$} \\
				\hline
				& NUN/NOBP & runtime (s) & NUN/NOBP & runtime (s) & NUN/NOBP & runtime (s) & NUN/NOBP & runtime (s) & NUN/NOBP & runtime (s) \\
				\hline
				A* 	& 163232/0 & $>6400$ & 169369/0 & $>6400$ & 144560/0 & $>6400$ & 136627/0 & $>6400$ & 160756/0 & $>6400$ \\
				A*-TOD 	& 134589/119871 & $>6400$ & 129680/126911 & $>6400$ & 80719/92627 & 3712.99 & 55764/58314 & 2709.21 & 49586/50118 & 1729.34 \\
				A*-NAPA 	& 35359/0 & 561.81 & 23775/0 & 351.07 & 175806/0 & 5993.68 & 147200/0 & $>6400$ & 29574/0 & 471.15 \\
				A*-NAPA-TOD 	& 33165/22275 & 770.08 & 19308/13459 & 451.39 & 15310/10946 & 429.06 & 15792/12073 & 576.82 & 14496/10752 & 373.36 \\
				A*-NAPA-DIBP  & 205/311 & \textbf{7.63} & 105/160 & \textbf{3.88} & 172/216 & \textbf{6.85} & 60/84 & \textbf{3.49} & 52/77 & \textbf{2.00} \\
				\hline
				A*-NAPA-DIBP vs & best previous method & faster by & best previous method & faster by & best previous method & faster by & best previous method & faster by & best previous method & faster by \\
				previous best method & A*/A*-TOD & $\mathbf{>839}$\textbf{x} & A*/A*-TOD & $\mathbf{>1648}$\textbf{x} & A*-TOD & $\mathbf{541}$\textbf{x} & A*-TOD & $\mathbf{775}$\textbf{x} & A*-TOD & $\mathbf{864}$\textbf{x} \\
				\hline
			\end{tabular}
		}
	}
	\caption{Linearized fundamental matrix estimation result. The names of the data are the image indices in the sequence. $o_{LRS}$ is the \emph{estimated} outlier number returned by LO-RANSAC. NUN: number of unique nodes generated. NOBP: number of branch pruning steps executed. The last row shows how much faster A*-NAPA-DIBP was, compared to the fastest previously proposed variants (A* and A*-TOD).}\label{tab:resultFund}
\end{table*}

%
Experiments were also conducted on real data. We executed all tree seach variants for linearized fundamental matrix estimation~\cite{Chin17}, which used the algebaric error~\cite[Sec.11.3]{hartley2003multiple} as the residual and ignored the non-convex rank-2 constraints. 5 image pairs (the first 5 crossroads) were selected from the sequence \texttt{00} of the KITTI Odometry dataset~\cite{Geiger2012CVPR}. For each image pair, the input was a set of SIFT~\cite{lowe1999object} feature matches generated using VLFeat~\cite{vedaldi2010vlfeat}. The inlier threshold $\epsilon$ was set to $0.03$ for all image pairs.

The result is shown in Table~\ref{tab:resultFund}. We also showed the number of unique nodes (NUN) generated and the number of branch pruning steps (NOBP) executed before the termination of each algorithm. A*-NAPA-DIBP found the optimal solution in less than 10s for all data, while A* and A*-TOD often failed to finish in 2 hours. A*-NAPA-DIBP was faster by more than 500 times on all data compared to the fastest method among A* and A*-TOD. For the effectiveness of each technique, applying NAPA to A* often resulted in more than 10x acceleration. And applying DIBP further sped up A*-NAPA by more than 1000x on challenging data (e.g. $\texttt{Frame-198-201}$). This significant acceleration is because many elements in $\sS_{\bB}$ were the ones that led to redundant nodes, which made most non-redundant paths effectively pruned. TOD was much less effective than DIBP and introduced extra runtime to A*-NAPA on \texttt{Frame-104-108} and \texttt{Frame-198-201}. We also attached $o_{LRS}$, the \emph{estimated} number of outliers returned from LO-RANSAC~\cite{chum2003locally}, which is an effective RANSAC variant. None of the LO-RANSAC results were optimal. A visualization of the tree search result is shown in Figure~\ref{fig:expReal}.

\subsection{Homography estimation (non-linear)}

\begin{table*}[!htb]\centering
	\ra{1.05}
	\resizebox{1\linewidth}{!}{
		\small{
			\begin{tabular}{|c|c|c|c|c|c|c|c|c|c|c|}
				\hline
				data & \multicolumn{2}{c|}{\texttt{Adam}} & \multicolumn{2}{c|}{\texttt{City}} & \multicolumn{2}{c|}{\texttt{Boston}} & \multicolumn{2}{c|}{\texttt{Brussels}} & \multicolumn{2}{c|}{\texttt{BruggeTower}} \\
				$d = 8$ & \multicolumn{2}{c|}{$o = 38$ $(o_{LRS} = 40)$; $N = 282$} & \multicolumn{2}{c|}{$o = 19$ $(o_{LRS} = 22)$; $N = 87$} & \multicolumn{2}{c|}{$o = 43$ $(o_{LRS} = 44)$; $N = 678$} & \multicolumn{2}{c|}{$o = 9$ $(o_{LRS} = 25)$; $N = 231$} & \multicolumn{2}{c|}{$o = 17$ $(o_{LRS} = 26)$; $N = 208$} \\
				\hline
				& NUN/NOBP & runtime (s) & NUN/NOBP & runtime (s) & NUN/NOBP & runtime (s) & NUN/NOBP & runtime (s) & NUN/NOBP & runtime (s) \\
				\hline
				A* 	& 224/0 & 538.91 & 7072/0 & $>6400$ & 406/0 & 2455.03 & 397/0 & 437.25 & 5003/0 & $>6400$ \\
				A*-TOD 	& 38/37 & \textbf{156.98} & 462/514 & 910.51 & 7/6 & \textbf{74.63} & 359/281 & 499.77 & 333/260 & 298.39 \\
				A*-NAPA 	& 168/0 & 404.77 & 6481/0 & $>6400$ & 234/0 & 1284.14 & 264/0 & 268.85 & 3731/0 & 4740.68 \\
				A*-NAPA-TOD 	& 38/37 & \textbf{156.98} & 286/241 & 485.36 & 7/6 & \textbf{74.63} & 249/191 & 297.91 & 201/151 & 161.95 \\
				A*-NAPA-DIBP  & 38/37 & \textbf{156.98} & 34/40 & \textbf{64.44} & 7/6 & \textbf{74.63} & 30/42 & \textbf{50.13} & 40/48 & \textbf{68.20} \\
				\hline
				A*-NAPA-DIBP vs & best previous method & faster by & best previous method & faster by & best previous method & faster by & best previous method & faster by & best previous method & faster by \\
				previous best method & A*-TOD & same runtime & A*-TOD & $\mathbf{13.1}$\textbf{x} & A*-TOD & same runtime & A* & $\mathbf{7.7}$\textbf{x} & A*-TOD & $\mathbf{3.4}$\textbf{x} \\
				\hline
			\end{tabular}
		}
	}
	\caption{Homography estimation result. $o_{LRS}$ is the \emph{estimated} outlier number returned by LO-RANSAC. NUN: number of unique nodes generated. NOBP: number of branch pruning steps executed. The last row shows how much faster A*-NAPA-DIBP was, compared to the fastest previously proposed variants (A* and A*-TOD).}\label{tab:resultHomo}
\end{table*}


To test all methods on non-linear problems, another experiment for homography estimation~\cite{hartley2003multiple} was done on ``homogr'' dataset\footnote{\url{http://cmp.felk.cvut.cz/data/geometry2view/index.xhtml}}. As before, we picked 5 image pairs, computed the SIFT matches and used them as the input data. The transfer error in one image~\cite{hartley2003multiple} was used as the residual, which was in the form of~\eqref{eq:pseudoConvex}. $\epsilon$ was set to 4 pixels.

Table~\ref{tab:resultHomo} shows the result of all methods. Compared to the linear case, solving non-linear minimax problems~\eqref{obj:minimax} and~\eqref{obj:minimaxCon} was much more time-consuming (can be $100$x slower with \texttt{fminimax}). Thus with similar NUN and NOBP, the runtime was much larger. However, the value of $\phi$ in the non-linear case was usually also much smaller, which made the heuristic $h_{ins}$ and in turn all branch pruning techniques much more effective than in the linear case. And for easy data such as \texttt{Boston} and \texttt{Adam}, performing either TOD or DIBP was enough to achieve the highest speed. Nonetheless, DIBP was still much more effective than TOD on other data. And DIBP never slowed down the A* tree search as TOD sometimes did (e.g., in \texttt{Brussels}). A*-NAPA-DIBP remained fastest on all image pairs. An example of the visual result is provided in Figure~\ref{fig:expReal}.

\vspace{-0.5em}

\section{Conclusion}

\vspace{-0.5em}

We presented two new acceleration techniques for consensus maximization tree search. The first avoids redundant non-adjacent paths that exist in the consensus maximization tree structure.	The second makes branch pruning much less sensitive to the problem dimension, and therefore much more reliable. The significant acceleration brought by the two techniques contributes a solid step towards practical and globally optimal consensus maximization.

\vspace{-1em}
\paragraph{Acknowledgements.} We thank Dr. Nan Li for his valuable suggestions.

\clearpage

\balance

{\small
	\bibliographystyle{ieee_fullname}
	\bibliography{treeSearch}
}

\end{document}